\listfiles
\documentclass[12pt,a4paper]{article}
\usepackage{amssymb}
\usepackage{amsmath}
\usepackage[symbol]{footmisc}
\usepackage{amsthm}
\usepackage{amscd}
\usepackage{multirow}
\usepackage{amstext}
\usepackage{booktabs}
\usepackage{vmargin}
\usepackage{float}
\usepackage{fancyhdr}
\setpapersize{A4}
\pagestyle{plain}
\setmarginsrb{3.0cm}{1.5cm}{3.0cm}{3cm}{1cm}{.5cm}{2cm}{2cm}

\usepackage{amsfonts}
\usepackage[numbers]{natbib}
\usepackage[latin1]{inputenc}
\usepackage[T1]{fontenc}
\usepackage[american]{babel}
\usepackage{graphicx}
\usepackage{bbm}
\usepackage[mathscr]{euscript}
\usepackage{mathrsfs}
\usepackage{stmaryrd}
\usepackage{soul}
\usepackage{hyperref}
\usepackage{xspace}
\usepackage[draft,danish]{fixme}
\usepackage{mathtools}
\usepackage{dsfont}
\usepackage{url}
\usepackage[ps,all,dvips]{xy}
\usepackage[shortlabels,inline]{enumitem}
\setlist[enumerate]{font=\textnormal}

\usepackage{caption}

\captionsetup{
	labelfont = bf,
	font = footnotesize,
}

\SelectTips{cm}{12}
\CompileMatrices

  \makeatletter
 \useshorthands{"}
 \defineshorthand{"-}{\nobreak-\bbl@allowhyphens}
 \makeatother



\newcommand{\dd}{\operatorname{d\mkern-2.5mu}}

\newcommand{\distEq}[1][]{%
	\overset{%
		\smash{%
			\raisebox{-0.5pt}{%
				\ $
				\scriptscriptstyle%
				\mathscr{D}%
				$
			}%
		}%
	}{%
		=%
	}
}


\newcommand{\pkg}[1]{{\fontseries{b}\selectfont #1}}

   

\DeclareMathOperator{\MSE}{MSE}	
\DeclareMathOperator{\Leb}{Leb}	
\DeclareMathOperator{\Var}{Var}
\DeclareMathOperator{\Cov}{Cov}
\DeclareMathOperator{\diam}{diam}
\DeclareMathOperator{\sign}{sign}

\theoremstyle{plain}
\newtheorem{lemma}{Lemma}[section]

\newtheorem{theorem}[lemma]{Theorem}
\newtheorem{corollary}[lemma]{Corollary}
\theoremstyle{definition}

\theoremstyle{definition}

\theoremstyle{remark}
\newtheorem{remark}[lemma]{Remark}

\setcounter{secnumdepth}{4}
\setcounter{tocdepth}{2}

\usepackage{pgfplots}
\pgfplotsset{
	every tick label/.append style={font=\footnotesize}
}

\tikzset{
	invisible/.style={opacity=0},
	visible on/.style={alt={#1{}{invisible}}},
	alt/.code args={<#1>#2#3}{%
		\alt<#1>{\pgfkeysalso{#2}}{\pgfkeysalso{#3}}
	},
}

\pgfplotsset{compat=newest}

\makeatletter \newcommand{\pgfplotsdrawaxis}{\pgfplots@draw@axis} \makeatother

\pgfplotsset{axis line on top/.style={
		axis line style=transparent,
		ticklabel style=transparent,
		tick style=transparent,
		axis on top=false,
		after end axis/.append code={
			\pgfplotsset{axis line style=opaque,
				ticklabel style=opaque,
				tick style=opaque,
				grid=none}
			\pgfplotsdrawaxis}
	}
}

\newcommand{\devnull}[1]{}

\numberwithin{equation}{section}

\title{Modeling of time series using random forests: theoretical developments}
\author{Richard A.~Davis\footnote{Department of Statistics, Columbia University, USA. E-mail: rdavis@stat.columbia.edu.} \and Mikkel S.~Nielsen\footnote{Department of Statistics, Columbia University, USA. E-mail: m.nielsen@columbia.edu.}}
\date{ }
	
\begin{document}
\maketitle

\begin{abstract}
In this paper we study asymptotic properties of random forests within the framework of nonlinear time series modeling. While random forests have been successfully applied in various fields, the theoretical justification has not been considered for their use in a time series setting. Under mild conditions, we prove a uniform concentration inequality for regression trees built on nonlinear autoregressive processes and, subsequently, we use this result to prove consistency for a large class of random forests. The results are supported by various simulations.
%
\\ \\
\footnotesize \textit{MSC 2010 subject classifications: 62G05, 62G08, 60G10, 60J05, 62M05, 62M10.} 
\\ \  \\
\textit{Keywords: Markov processes, nonlinear autoregressive models, nonparametric regression, random forests.} 
\end{abstract}


\section{Introduction}
Random forests, originally introduced by \citet{breiman2001random}, constitute an ensemble learning algorithm for classification and regression, which produces predictions by first growing a large number of randomized decision trees \cite{breiman1984classification} and, then, aggregates the results. Since its introduction, the algorithm has been applied in various fields such as object recognition \cite{shotton2011real}, bioinformatics \cite{diaz2006gene}, ecology \cite{cutler2007random,prasad2006newer} and finance \cite{gu2020empirical,kumar2006forecasting}, and the evidence is strong: with very little tuning, random forests are able to deliver a flexible tool for prediction which is fully comparable with other state-of-the-art algorithms. In fact, \citet{howard2012two} claim that random forests have been the most successful general-purpose algorithm in recent times. While many successful applications indicate the wide applicability of random forests, only little theoretical work exists to support this impression. Among components, which make the random forests of \citet{breiman2001random} difficult to analyze, are the operation of bagging randomized predictors \cite{breiman1996bagging} as well as the highly data-dependent partitions associated to the so-called CART regression trees \cite{breiman1984classification}, which form the forest. Other types of random forests have been proposed; see, for example, \cite{amaratunga2008enriched,geurts2006extremely}.

While the bagging step is often discarded in theoretical work, or replaced by another resampling method such as subsampling, asymptotic results for random forests in the (classical) nonparametric regression setting, where $(X_1,Y_1),\dots, (X_T,Y_T)$ are i.i.d.\ observations from the model
\begin{equation}\label{regressionSetting}
Y = f(X) + \varepsilon,
\end{equation}
have been established under rather weak assumptions on the structure of the underlying regression trees. In \eqref{regressionSetting}, $f$ is a suitable smooth function and $\varepsilon$ is a mean-zero square integrable noise term which is independent of $X$. To mention a few significant results in this setup, \citet{consRF} prove $L^2$ consistency of Breiman's random forests when $f$ is additive (i.e., $f(x) = \sum_{i=1}^p f_i (x_i)$) and $\varepsilon$ is Gaussian, \citet{wager2015adaptive} establish pointwise consistency of similar forests with larger leaves in a high-dimensional setting, and \citet{wager2018estimation} prove (pointwise) asymptotic normality of a particular random forest algorithm. Although assumptions are more restrictive, valuable insights about performance (i.e., convergence rates) in sparse settings and lower bounds on mean squared error were provided by \cite{biau2012analysis,biau2010layered,lin2006random}. For a nice overview of existing theoretical work on random forests within the regression setting as well as further references, see the survey in \citet{biau2016random}.

In some applications, particularly financial, the underlying data correspond to observations from a time series, and the aim is to predict future values by feeding in a number of the most recent observations to the algorithm. While the problem is often treated precisely as in the regression setting from a practical point of view, by forming pairs $(X_1,Y_1),\dots, (X_T,Y_T)$ where $X_t = (Y_{t-1},\dots, Y_{t-p})$ for some integer $p\geq 1$, things change dramatically on the theoretical side. Indeed, observations can no longer be assumed to be i.i.d.\ draws from \eqref{regressionSetting} and, instead, the entire process $(Y_t)_{t\geq 1}$ is necessarily defined recursively by the equation
\begin{equation}\label{nlar}
Y_t = f(X_t) + \varepsilon_t,\qquad t \geq 1,
\end{equation}
given initial data $\xi = (Y_0,Y_{-1},\dots, Y_{1-p})$. Processes satisfying \eqref{nlar} are often referred to as nonlinear autoregressive processes of order $p$ (or, in short, NLAR($p$) processes). For further detail on these processes, see \cite{an1996geometrical,hardle1997review}. In such a framework, the dependence structure, across pairs $(X_1,Y_1),\dots, (X_T,Y_T)$ as well as between entries in $X_t$, is determined within the model. Consequently, in contrast to the regression setup, it is often only appropriate to impose assumptions on $f$ and $(\varepsilon_t)_{t\geq 1}$. In fact, even if one accepts an implicit model assumption, e.g., the typical assumption that $X_t$ admits a copula density which is bounded away from zero and infinity, it turns out to be rather restrictive. Indeed, if $(Y_t)_{t\geq 1}$ is Gaussian and $p\geq 2$, such an assumption is satisfied only if $f = 0$ almost everywhere. It follows that other types of assumptions and techniques are needed to guarantee the validity of random forests in the time series setting. 

In this paper we rely on the principal ideas of \cite{wager2015adaptive} to obtain a uniform concentration inequality which applies simultaneously across all regression trees satisfying a mild condition on their minimum leaf size $k$, when data are generated by the NLAR($p$) model \eqref{nlar}. While it is required that $k$ increases in the sample size, the growth rate may be very slow and trees are allowed to be grown adaptively (the partitions of the trees can be highly data-dependent). As an application of the established concentration inequality, we prove that all random forests respecting a number of conditions are pointwise consistent estimators for $f$ when the data generating process is \eqref{nlar}. The assumptions we impose in the model \eqref{nlar} are explicit in terms of $f$ and the distribution of $(\varepsilon_t)_{t\geq 1}$, and they are not difficult to check. For instance, all our results are applicable if $f$ is bounded and Lipschitz continuous, and $(\varepsilon_t)_{t\geq 1}$ is an i.i.d.\ sequence with $\varepsilon_1$ having a suitably light-tailed distribution. (As pointed out in Section~\ref{concentrationTrees}, the assumption of $f$ being bounded is no stronger than what is usually imposed in the regression setting.) Our techniques rely on, among other things, the theory of Markov processes as well as various Bernstein type concentration inequalities. To the best of our knowledge, theoretical properties of random forests within the framework of time series have not been fully addressed.

The paper is laid out as follows. Section~\ref{concentrationTrees} introduces the model as well as the regression trees of interest and establishes uniform concentration of these around their so-called partition-optimal counterparts (Theorem~\ref{puttingThingsTogether}). In Section~\ref{consistency} we translate this result into a concentration inequality for random forests (Corollary~\ref{RFconcentration}) and provide sufficient conditions ensuring that they are pointwise consistent estimators of $f$ (Theorem~\ref{RFconsistency}). Subsequently, we carry out a simulation study in Section~\ref{simulation} which considers the performance of random forests within the NLAR($p$) model for various specifications of $f$. Finally, Section~\ref{proofs} contains proofs of all statements as well as a number of auxiliary results.

\section{Concentration of regression trees around partition-optimal counterparts}\label{concentrationTrees}
Let $(\varepsilon_t)_{t\geq 1}$ be a sequence of i.i.d. random variables with $\mathbb{E}[\varepsilon_1] = 0$ and $\mathbb{E}[\varepsilon_1^2]<\infty$, and fix an integer $p \geq 1$. Given a vector $\xi = (Y_{0},Y_{-1},\dots, Y_{1-p})$ of initial data independent of $(\varepsilon_t)_{t\geq 1}$ and a measurable function $f\colon \mathbb{R}^p\to \mathbb{R}$, define the process $(Y_t)_{t\geq 1}$ recursively by 
\begin{equation}\label{mainObjective}
Y_t = f(X_t) + \varepsilon_t,\qquad t \geq 1,
\end{equation}
where $X_t \coloneqq (Y_{t-1},\dots, Y_{t-p})$. In addition to the initial data $\xi$, suppose that we have $T$ observations $Y_1,\dots, Y_T$ from the model \eqref{mainObjective} available and that we group them in input-output pairs, 
\begin{equation*}
\mathcal{D}_T = \{(X_1,Y_1),\dots, (X_T,Y_T)\}.
\end{equation*}
The aim of this section is to establish uniform concentration inequalities for regression trees built on $\mathcal{D}_T$. We start by recalling the associated concept of recursive partitions \cite{breiman1984classification}, which is used to construct regression trees. Define a sequence of partitions $\mathcal{P}_1,\mathcal{P}_2,\dots$ by starting from $\mathcal{P}_1 = \{\mathbb{R}^p\}$ and then, for each $n\geq 1$, construct $\mathcal{P}_{n+1}$ from $\mathcal{P}_n$ by replacing one set (node) $A\in \mathcal{P}_n$ by $A_L \coloneqq \{x\in A\, :\, x_i \leq \tau\}$ and $A_R\coloneqq \{x\in A\, :\, x_i > \tau\}$, where the split direction $i \in \{1,\dots, p\}$ and split position $\tau \in \{x_i\, :\, x \in A\}$ are chosen in accordance with some set of rules. Here $x_i$ refers to the $i$-th entry of $x \in \mathbb{R}^p$. In this context, we will say that $A$ is the \textit{parent} node of $A_L$ and $A_R$, while $A_L$ and $A_R$ are the \textit{child} nodes of $A$. A given partition $\Lambda$ of $\mathbb{R}^p$ is called recursive if $\Lambda = \mathcal{P}_n$ for some $n\geq 1$, where $\mathcal{P}_1,\dots, \mathcal{P}_n$ are obtained as above. Note that the rules determining how to choose node, direction and position of a split may depend on the data $\mathcal{D}_T$ as well as some injected randomness $\Theta$. For instance, in Breiman's random forests a node is split as soon as it contains at least a certain number of observations, while the position and direction are determined by maximizing impurity decrease (or, equivalently, minimizing the total mean-corrected sum of squares of the outputs $Y$ over the resulting two child nodes; see also \cite{breiman1984classification}), but only over a randomly chosen subset of directions in $\{1,\dots, p\}$. To any recursive partition $\Lambda$ we define the corresponding regression tree $T_\Lambda$ by
\begin{equation}\label{sampleTree}
T_\Lambda (x) = \frac{1}{\vert \{t\in \{1,\dots, T\}\, :\, X_t \in A_\Lambda (x)\}\vert }\sum_{t=1}^TY_t\mathds{1}_{A_\Lambda (x)}(X_t),\qquad x \in \mathbb{R}^p.
\end{equation}
Here the notation $A_\Lambda (x)$ is used to refer to the unique set $A\in \Lambda$ with the property that $x\in A$. Our interest will be on regression trees defined by $k$-valid partitions ($k\geq 1$). We will say that a partition $\Lambda$ is $k$-valid, and write $\Lambda \in \mathcal{V}_{k}$, if $\Lambda$ is recursive and each set in $\Lambda$ (sometimes called a leaf of the corresponding tree $T_\Lambda$) contains at least $k$ data points. Note that, since $\Lambda$ is recursive, it can depend on both the data $\mathcal{D}_T$ and a random mechanism $\Theta$, while $\mathcal{V}_{k}$ depends only on $\mathcal{D}_T$. Setting a minimum number $k$ of observations in each leaf of a tree is default in most practical implementations of random forests. Besides, such an assumption is natural since it ensures that $X_t\in A_\Lambda (x)$ for some $t\in \{1,\dots, T\}$, and this implies that the regression tree \eqref{sampleTree} is well-defined for all $x\in \mathbb{R}^p$. In this section we will be working under the following set of assumptions:
\begin{enumerate}[label=(A\arabic*)]
	\item\label{noiseDist}  The random variable $\varepsilon_1$ admits a density $h_\varepsilon$ which is positive almost everywhere on $\mathbb{R}$ and, for some $c\in (0,\infty)$,
	\begin{equation}\label{subGaussian}
	\mathbb{E}[\vert \varepsilon_1 \vert^m]\leq m!c^{m-2},\qquad m=3,4,\dots
	\end{equation}
	Moreover, the cumulative distribution function $F_\varepsilon (x) = \int_{-\infty}^x h_\varepsilon (y)\, \dd y$ of $\varepsilon_1$ satisfies
	\begin{equation}\label{tailCond}
	\sup_{x \in \mathbb{R}} \frac{F_\varepsilon (x+\tau)}{F_\varepsilon (x)}<\infty
	\end{equation}
	for any $\tau \in (0,\infty)$.
	
	\item\label{sparsity} The function $f$ in \eqref{mainObjective} is bounded,
	\begin{equation}\label{finiteRegression}
	M\coloneqq \sup_{x \in \mathbb{R}^{p}} \vert f(x)\vert <\infty.
	\end{equation}
	
	\item\label{leavesAssumption} The minimum number of leaves $k$ satisfies $k/(\log T)^4 \to \infty$ as $T\to \infty$.
	
\end{enumerate}

\noindent In contrast to $k$, the quantities $c$, $M$ and $p$ will be kept fixed, and hence we will not keep track of the dependence on these in the following results. In particular, the introduced constants can depend on $c$, $M$ and $p$, but not on $T$ and $k$. When $\varepsilon_1$ admits a density which is positive almost everywhere and $f$ satisfies \eqref{finiteRegression} (in particular, when \ref{noiseDist} and \ref{sparsity} are imposed), it follows by \cite[Theorem~3.1]{an1996geometrical} that the distribution of $\xi$ can be chosen such that $(Y_t)_{t\geq 1}$ is strictly stationary and, thus, this will be assumed throughout the paper. This means that $(X_1,Y_1),\dots, (X_T,Y_T)$ are identically distributed. Before turning to the results, let us attach some comments to the assumptions stated above. The assumption of \ref{noiseDist} that $\varepsilon_1$ has a positive density is convenient, since it ensures that the $p$-th order Markov chain $(Y_t)_{t\geq 1}$ can reach any state in one time step. In addition to strict stationarity of the chain, when combined with  \ref{sparsity}, the assumption ensures geometrical ergodicity as well. While it is not required that $f$ is bounded to prove such properties of $(Y_t)_{t\geq 1}$, we need boundedness to apply Bernstein type inequalities for weakly dependent processes and to obtain good estimates on the dependency between entries of the input vector $X_1$. The boundedness assumption \eqref{finiteRegression} is implicitly assumed in essentially all theoretical work on random forests as one usually assumes that the input vector is transformed so that it belongs to the unit cube $[0,1]^p$ and then requires continuity of $f$ on this domain. The assumption on the moments of $\varepsilon_1$ in \eqref{subGaussian} is known as Bernstein's condition and implies that $\varepsilon_1$ is sub-exponential in the sense that
\begin{equation}\label{subexponential}
\mathbb{P}(\vert \varepsilon_1 \vert >x) \leq \gamma_1 e^{-\gamma_2 x},\qquad x>0,
\end{equation}
for suitably chosen $\gamma_1,\gamma_2\in (0,\infty)$. It is a well-known assumption to impose when proving concentration inequalities and is often needed when $\varepsilon_1$ cannot be assumed bounded. Among distributions satisfying Bernstein's condition \eqref{subGaussian} are (sub-)Gaussian distributions, but also those with a slightly heavier tail such as the Laplace distribution. The assumption \eqref{tailCond} is used in conjunction with \eqref{finiteRegression} to estimate probabilities involving the input vector $X_1$ (see Lemma~\ref{copula} for details). Ultimately, it is an assumption on the left tail of $\varepsilon_1$, and a sufficient condition for this to hold is that the limit
\begin{equation*}
\lim_{x\to -\infty}\frac{h_\varepsilon (x)}{h_\varepsilon (x+\tau)}
\end{equation*}
exists and is non-zero for all $\tau \in (0,\infty)$. It is straightforward to verify that this, as well, is satisfied for both Gaussian and Laplace distributions. Together with \eqref{finiteRegression}, \eqref{tailCond} ensures that we do not need to impose conditions on the copula density of the input vector $X_1$, as is usually done in the regression setting, and this is convenient since such conditions can be both difficult to verify and even rather restrictive in a time series setting. Finally, we impose \ref{leavesAssumption}, which in particular implies that $k \to \infty$ as $T\to \infty$. Although it is allowed that $k\to \infty$ at a slow rate, the assumption contrasts the trees used in the random forests of \citet{breiman2001random}, where $k$ is some fixed and often small number. On the other hand, \ref{leavesAssumption} is very similar to assumptions imposed in most theoretical work within the regression setting (see, e.g., \cite{biau2012analysis,consRF,wager2015adaptive}). In fact, to the best of our knowledge, the only asymptotic result for random forests built on trees with fixed $k$ is \cite[Theorem~2]{consRF}. The logarithmic factor $(\log T)^4$ is related to the fact that the established bound applies uniformly across all trees (see Remark~\ref{comparison}) and that we use a Bernstein type inequality for strongly mixing processes which is slightly weaker than the classical one for the independent case.

While a couple of additional assumptions are needed to establish consistency of random forests in Section~\ref{consistency}, \ref{noiseDist}--\ref{leavesAssumption} are sufficient to prove that regression trees of the form \eqref{sampleTree} concentrate around their so-called partition-optimal counterparts
\begin{equation}\label{POtree}
T^\ast_\Lambda (x) \coloneqq \mathbb{E}_\Lambda [Y\mid X\in A_\Lambda(x)]
\end{equation}
uniformly across $(x,\Lambda)\in \mathbb{R}^p \times \mathcal{V}_k$. Here $(X,Y)$ is a copy of $(X_1,Y_1)$ which is independent of $(\mathcal{D}_T,\Theta)$, and $\mathbb{E}_\Lambda$ denotes expectation with respect to the conditional probability measure $\mathbb{P}_\Lambda \coloneqq \mathbb{P}(\: \cdot \mid \mathcal{D}_T ,\Theta)$. Conditional on $(\mathcal{D}_T,\Theta)$, the set $A_\Lambda (x)$ is non-random and, hence, the right-hand side of \eqref{POtree} simply means that the map $A\mapsto \mathbb{E}[Y\mid X\in A]$ is evaluated at $A_\Lambda(x)$. Our setting is very similar to that of \citet{wager2015adaptive}, but besides requiring partitions to be $k$-valid they impose an additional assumption that excludes too ``unbalanced'' splits (see also the trees constructed in Section~\ref{consistency}). 
\begin{theorem}\label{puttingThingsTogether}
	Suppose that \ref{noiseDist}--\ref{leavesAssumption} are satisfied. Then there exists a constant $\beta\in (0,\infty)$ such that
	\begin{equation}\label{mainInequality}
	\sup_{(x,\Lambda)\in \mathbb{R}^p\times \mathcal{V}_k} \vert T_\Lambda (x) - T^\ast_\Lambda (x) \vert \leq \beta \frac{(\log T)^2}{\sqrt{k}}
	\end{equation}
	with probability at least $1-4T^{-1}$ for all sufficiently large $T$.
\end{theorem}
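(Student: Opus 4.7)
The strategy will follow the selective-inference approach of \citet{wager2015adaptive}, suitably modified to cope with the dependence and unboundedness of the response in the NLAR model. The starting point is that, although the partitions $\Lambda\in \mathcal{V}_k$ are data-dependent, every cell $A_\Lambda(x)$ is an axis-aligned hyperrectangle whose endpoints coincide with coordinates of $X_1,\dots,X_T$. The collection $\mathcal{R}$ of all such rectangles therefore has cardinality polynomial in $T$ (at most $(2pT+1)^{2p}$). If I can establish a concentration bound uniformly over $A\in\mathcal{R}$, the supremum over $(x,\Lambda)\in \mathbb{R}^p\times \mathcal{V}_k$ is controlled at the same rate, because for each fixed $\Lambda$ only finitely many cells $A_\Lambda(x)$ are realised as $x$ varies, and each such cell belongs to $\mathcal{R}$.

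For a fixed rectangle $A$, set
\begin{equation*}
N_A=\sum_{t=1}^T \mathds{1}_A(X_t),\quad \tilde S_A=\sum_{t=1}^T Y_t \mathds{1}_A(X_t),\quad p_A=\mathbb{P}(X_1\in A),\quad m_A=\mathbb{E}[Y_1 \mathds{1}_A(X_1)],
\end{equation*}
so that $T_\Lambda(x)=\tilde S_A/N_A$ and $T^\ast_\Lambda(x)=m_A/p_A$ when $A=A_\Lambda(x)$. A direct algebraic check gives
\begin{equation*}
T_\Lambda(x)-T^\ast_\Lambda(x)=\frac{\tilde S_A-Tm_A}{N_A}-\frac{m_A}{p_A}\cdot\frac{N_A-Tp_A}{N_A}.
\end{equation*}
Using $|m_A/p_A|\leq M$ and $N_A\geq k$, it therefore suffices to bound both $|\tilde S_A-Tm_A|$ and $|N_A-Tp_A|$ by something like $C\sqrt{Tp_A}\,\log T + C(\log T)^3$, uniformly over $A\in\mathcal{R}$, with probability at least $1-2T^{-1}$. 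On the event where the second of these is strictly smaller than $Tp_A/2$ we have $N_A\asymp Tp_A$, so $\sqrt{Tp_A}/N_A\lesssim 1/\sqrt{N_A}\leq 1/\sqrt{k}$, and the assumption $k\gg (\log T)^4$ from \ref{leavesAssumption} ensures that $(\log T)^3/N_A \leq (\log T)^2/\sqrt{k}$ as well; the announced rate $(\log T)^2/\sqrt{k}$ follows.

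The per-rectangle concentration for $|N_A-Tp_A|$ and $|\tilde S_A-Tm_A|$ is obtained from a Bernstein-type inequality for geometrically $\beta$-mixing processes, the mixing of $(X_t,Y_t)$ being guaranteed together with strict stationarity by \cite[Theorem~3.1]{an1996geometrical} under \ref{noiseDist}--\ref{sparsity}. The indicator summands $\mathds{1}_A(X_t)$ are bounded by $1$, while the summands $Y_t\mathds{1}_A(X_t)$ are only sub-exponential through $\varepsilon_t$. The latter is handled by truncating $\varepsilon_t$ at level $O(\log T)$, the truncation error being $O(T^{-c})$ by \eqref{subexponential}. The truncated summands have envelope $O(\log T)$ and variance proxy $\lesssim p_A$, so the mixing Bernstein inequality, with the extra $\log T$ in the exponential tail stemming from the blocking argument, delivers the required per-rectangle deviation. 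A union bound over the $|\mathcal{R}|=O(T^{2p})$ rectangles costs only an additional $\log T$ inside the exponent, so the probability of failure remains $O(T^{-1})$.

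The main obstacle I anticipate is the one the authors flag in the discussion following \ref{leavesAssumption}: in the NLAR setting the joint distribution of the entries of $X_t$ need not have a copula density bounded away from zero, so one cannot use Lebesgue comparisons to ensure that $p_A$ is not unreasonably small on rectangles with $N_A\geq k$. Such rectangles must be excluded from, or carefully handled in, the union bound; this is presumably the role of assumption \eqref{tailCond} and of the referenced Lemma~\ref{copula}, which should provide a lower bound on $p_A$ in terms of the geometry of $A$ strong enough to replace $N_A\geq k$ by $Tp_A\gtrsim k$ up to logarithmic losses. Once this step is in place, combining it with the mixing Bernstein inequality, the truncation argument, and the polynomial cover closes the proof, the $(\log T)^2$ factor tracing to the product of the Bernstein and union-bound logarithms.
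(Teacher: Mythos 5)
Your reduction of the supremum to a union bound over a polynomial-size family is where the argument breaks. You assert that every cell $A_\Lambda(x)$ of a $k$-valid partition is a rectangle whose endpoints coincide with coordinates of $X_1,\dots,X_T$, so that the realised cells form a class $\mathcal{R}$ of cardinality $O(T^{2p})$. That is not true for the class $\mathcal{V}_k$ over which the theorem takes the supremum: the split position $\tau$ may be any point of the node's projection (the extremely randomized trees used in Section~\ref{simulation} even draw $\tau$ uniformly at random), so the family of possible leaves is uncountable, and the supremum over $(x,\Lambda)\in\mathbb{R}^p\times\mathcal{V}_k$ cannot be written as a maximum over finitely many data-determined rectangles. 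This matters because, although the empirical quantities $N_A$ and $\tilde S_A$ only depend on which data points lie in $A$, the population quantity $T^\ast_\Lambda(x)=m_A/p_A$ varies continuously as the cell boundary moves between data points, so two leaves with the same data content give different targets. Moreover, even granting your premise, the family $\mathcal{R}$ would be data-dependent, and applying a per-rectangle concentration inequality (which requires a fixed rectangle) and then union bounding over a random collection is not legitimate without an additional covering or symmetrization step. This is exactly the obstruction the paper flags ("there are infinitely many rectangles in $\mathcal{L}_k$, so one cannot simply analyze $\vert G_T(L)\vert$ and then rely on a union bound") and resolves by importing the Wager--Walther approximating family $\mathcal{R}_{\varepsilon,w}$ of Theorem~\ref{approxSets}, which is deterministic, has $\log\vert\mathcal{R}_{\varepsilon,w}\vert\lesssim\log T$, and approximates every non-negligible rectangle from inside and outside; the deviation on an arbitrary leaf is then controlled through the three-term decomposition \eqref{decomp} (Lemmas~\ref{firstTerm}, \ref{secondTerm} and \ref{thirdTerm}), with the uniform control of the counts $\#R$ in Lemma~\ref{eventE} doing the work your $N_A\asymp Tp_A$ step is meant to do.

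The remaining ingredients you list are broadly consonant with the paper's proof: strict stationarity and geometric ergodicity from \cite[Theorem~3.1]{an1996geometrical}, a Bernstein-type inequality for geometrically mixing sequences for the counting and $f(X_t)$ parts, a separate treatment of the sub-exponential noise (the paper uses a Freedman-type inequality for unbounded martingale differences together with $\max_t\vert\varepsilon_t\vert\lesssim\log T$ rather than truncation, which would also force you to control the truncation bias since the truncated noise is no longer centered), and the role of \eqref{tailCond} via Lemma~\ref{copula}, which is used not to lower-bound $p_A$ directly but to transform the inputs to $[0,1]^p$ with a density bounded above and below, so that Lebesgue measure and $\mu$ are comparable and the approximation machinery applies. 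But without a correct device for passing from finitely many fixed rectangles to the full, uncountable and data-dependent class of leaves, the proposal as written does not prove the uniform statement.
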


\begin{remark}\label{comparison}
	For any given pair $(x,\Lambda)\in \mathbb{R}^p\times \mathcal{V}_{k}$, the quantity $\vert T_\Lambda (x) - T^\ast_\Lambda (x)\vert$ is the deviation of the sample average over at least $k$ observations from its theoretical counterpart within a specific leaf $L$. Some of the leaves, which can be obtained by varying $(x,\Lambda)$, contain only $k$ observations and for these, the error is of order $1/\sqrt{k}$. This is almost the same upper bound as in \eqref{mainInequality} apart from the logarithmic factor $(\log T)^2$, which reflects the fact that the deviation is controlled simultaneously across all feasible pairs $(x,\Lambda)$ as well as the sub-exponential tail of $\varepsilon_1$.
\end{remark}

\begin{remark}\label{autoregOrder}
	In Theorem~\ref{puttingThingsTogether}, and the remaining results of this paper, it is assumed that one is able to select a suitable $p \geq 1$ such that \eqref{mainObjective} is correctly specified. If it is not possible to identify such $p$, one may consider a sequence of models (indexed by $T$) where $p$ increases as more data become available. Eventually, if $(Y_t)_{t\geq 1}$ is an NLAR($p^\ast$) process for some $p^\ast \geq 1$, this will ensure that the model is correctly specified for large samples. Under suitable assumptions, Theorem~\ref{puttingThingsTogether} can in fact be adjusted to allow for such setting by adapting the ideas of \cite{wager2015adaptive} and keeping track of how constants depend on $p$. However, the resulting upper bound on the uniform deviation of regression trees from their partition-optimal counterparts seems to be rather sensitive to the value of $p$ and, thus, effectively demands that $p$ increases very slowly in $T$.
\end{remark}

\section{Concentration and consistency of forests}\label{consistency}
We start by translating the concentration inequality of Theorem~\ref{puttingThingsTogether} into the framework of random forests, which are constructed by averaging a number of trees. To this end, let $\mathcal{W}_k \coloneqq \{\boldsymbol{\Lambda} \subseteq \mathcal{V}_k\, :\, \vert \boldsymbol{\Lambda}\vert < \infty \}$ be the family of all finite collections of $k$-valid partitions. In line with \citet{wager2015adaptive}, given an element $\boldsymbol{\Lambda} =\{\Lambda_1,\dots, \Lambda_B\}$ of $\mathcal{W}_k$, the corresponding $k$-valid random forest $H_{\boldsymbol{\Lambda}}$ is defined as
\begin{equation}\label{forestEstimate}
H_{\boldsymbol{\Lambda}}(x) = \frac{1}{B}\sum_{b=1}^B T_{\Lambda_b} (x),\qquad x \in \mathbb{R}^p.
\end{equation}
The associated partition-optimal forest $H_{\boldsymbol{\Lambda}}^\ast$ is given by
\begin{equation*}
H_{\boldsymbol{\Lambda}}^\ast (x) = \frac{1}{B}\sum_{b=1}^B T^\ast_{\Lambda_b} (x),\qquad x\in \mathbb{R}^p.
\end{equation*}

\noindent As an immediate consequence of Theorem~\ref{puttingThingsTogether}, we obtain the following concentration inequality which applies uniformly across all $k$-valid forests (the result is stated without proof):

\begin{corollary}\label{RFconcentration}
	Suppose that \ref{noiseDist}--\ref{leavesAssumption} are satisfied. Then there exists a constant $\beta\in (0,\infty)$ such that
	\begin{equation*}
	\sup_{(x,\boldsymbol{\Lambda})\in \mathbb{R}^p\times \mathcal{W}_k} \vert H_{\boldsymbol{\Lambda}} (x) - H_{\boldsymbol{\Lambda}}^\ast (x) \vert \leq \beta \frac{(\log T)^2}{\sqrt{k}}
	\end{equation*}
	with probability at least $1-4T^{-1}$ for all sufficiently large $T$.
\end{corollary}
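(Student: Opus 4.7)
The plan is to derive this as a direct consequence of Theorem~\ref{puttingThingsTogether} via the triangle inequality, since the event on which the uniform tree bound holds is the \emph{same} event across all $\Lambda\in\mathcal{V}_k$. In particular, no additional union bound is required, and so the probability $1-4T^{-1}$ carries over without degradation.

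Concretely, I would first fix the good event $E_T$ of probability at least $1-4T^{-1}$ (for $T$ large enough) on which, by Theorem~\ref{puttingThingsTogether},
\begin{equation*}
\sup_{(x,\Lambda)\in\mathbb{R}^p\times\mathcal{V}_k}\bigl|T_\Lambda(x)-T^\ast_\Lambda(x)\bigr|\;\le\;\beta\,\frac{(\log T)^2}{\sqrt{k}}.
\end{equation*}
Working on this event, I would then take an arbitrary $\boldsymbol{\Lambda}=\{\Lambda_1,\dots,\Lambda_B\}\in\mathcal{W}_k$ and an arbitrary $x\in\mathbb{R}^p$, write the difference of the forest and the partition-optimal forest as an average, and apply the triangle inequality:
\begin{equation*}
\bigl|H_{\boldsymbol{\Lambda}}(x)-H_{\boldsymbol{\Lambda}}^\ast(x)\bigr|
\;\le\;\frac{1}{B}\sum_{b=1}^{B}\bigl|T_{\Lambda_b}(x)-T^\ast_{\Lambda_b}(x)\bigr|
\;\le\;\beta\,\frac{(\log T)^2}{\sqrt{k}},
\end{equation*}
where in the last step each summand is bounded by the uniform tree estimate because $\Lambda_b\in\mathcal{V}_k$. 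The crucial point is that the right-hand side is free of both $B$ and the specific choice of $\boldsymbol{\Lambda}$, so taking the supremum over $(x,\boldsymbol{\Lambda})\in\mathbb{R}^p\times\mathcal{W}_k$ preserves the inequality.

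There is essentially no obstacle here; the only thing to verify carefully is that the supremum in \eqref{mainInequality} is taken over \emph{all} pairs simultaneously on a single high-probability event (as is asserted in Theorem~\ref{puttingThingsTogether}), which is precisely what allows the bound to be applied term-by-term inside the finite average defining $H_{\boldsymbol{\Lambda}}$. Since $\beta$ does not depend on $B$ or on how the partitions $\Lambda_b$ were produced (e.g.\ through some randomization $\Theta$), the same constant $\beta$ from Theorem~\ref{puttingThingsTogether} can be reused verbatim, and the probability statement $1-4T^{-1}$ is inherited unchanged.
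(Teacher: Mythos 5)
Your argument is correct and is exactly the reasoning the paper has in mind: the corollary is stated there without proof precisely because it follows from Theorem~\ref{puttingThingsTogether} by averaging the uniform tree bound over the finitely many $\Lambda_b\in\mathcal{V}_k$ on the single high-probability event, with the same constant $\beta$ and probability $1-4T^{-1}$. Nothing further is needed.
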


\noindent Note that all trees $T_{\Lambda_1},\dots, T_{\Lambda_B}$ in \eqref{forestEstimate} are based on the same data set $\mathcal{D}_T$ (the partitions $\Lambda_1,\dots \Lambda_B$ as well as the averages within the relevant leaves $A_{\Lambda_1}(x),\dots, A_{\Lambda_B}(x)$ depend on $\mathcal{D}_T$). In contrast, in the random forests of \citet{breiman2001random}, an initial bootstrap step is performed before growing each tree, meaning that trees are built on a bootstrap sample from $\mathcal{D}_T$ (with replacement) rather than on $\mathcal{D}_T$ itself. Once we have a concentration inequality as in Theorem~\ref{puttingThingsTogether} (or Corollary~\ref{RFconcentration}) at our disposal, it is not difficult to design trees in such a way that the corresponding random forests are consistent estimators of $f$. Roughly speaking, given that $f$ is smooth, and since each tree in a forest is close to its partition-optimal counterpart with high probability, it is sufficient to design the recursive partitioning scheme such that the maximal diameter of each leaf shrinks to zero as $T$ becomes large. Below we demonstrate how to refine the collection of $k$-valid partitions $\mathcal{V}_k$ in a suitable way and, subsequently, prove consistency of the corresponding forests. The construction will be similar to those of \cite{meinshausen2006quantile,wager2018estimation,wager2015adaptive}. We emphasize that the refinement considered here does not result in one particular random forest estimator; rather, a number of rules is outlined, and these will ensure consistency of any random forest estimator, which is built in line with them. With $\alpha \in (0,1/2)$, $k \geq 1$ and $m \geq 2k$, we call $\Lambda$ an $(\alpha,k,m)$-valid partition, and write $\Lambda\in \mathcal{V}_{\alpha,k,m}$, if it is recursive and obeys the following rules:
\begin{enumerate}[(i)]
	\item\label{maxLeaves} Any currently unsplit node with at least $m$ data points will eventually be split.
	\item\label{splitDirections} The probability $\rho_i = \rho_i (\mathcal{D}_T)$ that a given (feasible) node is split along the $i$-th direction is bounded from below for all $i=1,\dots, p$ by a strictly positive constant.
	\item\label{splitPoints} The split position is chosen such that each child node contains at least a fraction $\alpha \in (0,1/2)$ of the data points in its parent node.
	\item\label{minLeaves} All leaves of the tree contain at least $k$ data points.
\end{enumerate}
The corresponding $(\alpha,k,m)$-valid forest is given by \eqref{forestEstimate} with $\Lambda_1,\dots, \Lambda_B \in \mathcal{V}_{\alpha,k,m}$. Let us now briefly address the rules outlined in \ref{maxLeaves}--\ref{minLeaves}. Clearly, \ref{minLeaves} ensures $\mathcal{V}_{\alpha,k,m}\subseteq\mathcal{V}_k$, and thus $(\alpha,k,m)$-valid forests form a subclass of $k$-valid forests. Rule~\ref{maxLeaves} controls the maximal number of observations in each leaf of a tree, and $m=2k$ corresponds to a situation where one keeps splitting until placing another split would violate \ref{minLeaves}. In general, if $m$ is not too large relative to $T$, this condition ensures that the number of leaves becomes large and, hence, the partition becomes fine. Concerning \ref{splitDirections}, it ensures that, eventually, a split will be placed along any of the $p$ (canonical) directions of the input space $\mathbb{R}^p$. Such a condition makes sense for us when $p$ is thought of as being fixed and rather small, but will not be reasonable in sparse settings where $p\to \infty$, and one will instead design the algorithm in a way that detects important directions with high probability. On the other hand, $\rho_i$ is indeed allowed to depend on $\mathcal{D}_T$, so one may use the data to identify which of the directions that are most important and then, based on this, form the probabilities $\rho_1,\dots, \rho_p$. In a time series setting, it may be advantageous to favor splits along the first direction which corresponds to an observation that is likely to be highly dependent with the observed value of $Y$. Finally, \ref{splitPoints} is a balancing condition which prohibits\ ``edge splits''. This is a technical condition imposed to track the distribution of data points among leaves. In theoretical work on random forests within the regression setting, it is typical to impose assumptions similar to \ref{maxLeaves}--\ref{minLeaves}, see \cite{meinshausen2006quantile,wager2018estimation,wager2015adaptive}. On the other hand, standard implementations, such as the RandomForestRegressor from the \pkg{sklearn} library in Python and the \pkg{ranger} package in R, incorporate only \ref{maxLeaves}, \ref{splitDirections} and \ref{minLeaves}. 

Since consistency will be established by relying on Theorem~\ref{puttingThingsTogether}, we require that \ref{noiseDist}--\ref{leavesAssumption} are satisfied. Moreover, the following assumptions are imposed:
\begin{enumerate}[start=4,label=(A\arabic*)]
	\item\label{Lip} The function $f$ in \eqref{mainObjective} is $C$-Lipschitz, that is,
	\begin{equation*}
	\vert f(x^\prime)-f(x) \vert \leq C \lVert x^\prime - x\rVert\qquad \text{for all $x,x^\prime \in \mathbb{R}^p$}
	\end{equation*}
	with $C\in (0,\infty)$ being a suitable constant and $\lVert \: \cdot \: \rVert$ some norm on $\mathbb{R}^p$.
	
	\item\label{assumpMaxLeaves} It holds that $\log (T/m)/\log (\alpha^{-1}) \to \infty$ as $T \to \infty$.
\end{enumerate}

\noindent With assumptions \ref{noiseDist}--\ref{assumpMaxLeaves} in hand, we can now state the following consistency result for $(\alpha,k,m)$-forests applied to nonlinear autoregressive processes: 
\begin{theorem}\label{RFconsistency}
	Let $\hat{f}_T$ be an $(\alpha,k,m)$-forest and suppose that \ref{noiseDist}--\ref{assumpMaxLeaves} are satisfied. Then the following statements hold:
	\begin{enumerate}[(a)]
		\item\label{pointwise} $\hat{f}_T$ is a pointwise consistent estimator of $f$ in the sense that 
		\begin{equation*}
		\hat{f}_T(x) \longrightarrow f(x)\qquad \text{in probability as $T\to \infty$}.
		\end{equation*}
		for any $x\in \mathbb{R}^p$.
		\item\label{meanSquare} $\hat{f}_T(X)$ is a consistent estimator of the conditional mean $\mathbb{E}[Y \mid X]$ in the sense that
		\begin{equation*}
		\hat{f}_T(X) \longrightarrow \mathbb{E}[Y\mid X] \qquad \text{in probability as $T\to \infty$}.
		\end{equation*}
	\end{enumerate}
\end{theorem}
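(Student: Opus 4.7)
Let $\boldsymbol{\Lambda}=\{\Lambda_1,\dots,\Lambda_B\}\subseteq\mathcal V_{\alpha,k,m}$ denote the (random) collection of partitions underlying $\hat f_T$. Since $\mathcal V_{\alpha,k,m}\subseteq\mathcal V_k$, the decomposition
\begin{equation*}
\hat f_T(x)-f(x)=\bigl(\hat f_T(x)-H^\ast_{\boldsymbol{\Lambda}}(x)\bigr)+\bigl(H^\ast_{\boldsymbol{\Lambda}}(x)-f(x)\bigr)
\end{equation*}
reduces part~\ref{pointwise} to controlling the two bracketed terms. The first term is handled by Corollary~\ref{RFconcentration}: on an event of probability at least $1-4T^{-1}$ it is at most $\beta(\log T)^2/\sqrt k$, which vanishes by \ref{leavesAssumption}. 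For the second term, independence of $\varepsilon_1$ and $X_1$ combined with $\mathbb E[\varepsilon_1]=0$ gives $T^\ast_{\Lambda_b}(x)=\mathbb E[f(X)\mid X\in A_{\Lambda_b}(x)]$, and \ref{Lip} yields
\begin{equation*}
\bigl|H^\ast_{\boldsymbol{\Lambda}}(x)-f(x)\bigr|\le \frac{C}{B}\sum_{b=1}^B \mathbb E\bigl[\lVert X-x\rVert\,\big|\,X\in A_{\Lambda_b}(x)\bigr].
\end{equation*}
Writing $A_{\Lambda_b}(x)=\prod_i(L_i^{b},R_i^{b}]$ and using equivalence of norms on $\mathbb R^p$, this is at most a constant times $\max_b\max_i(R_i^{b}-L_i^{b})$. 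Everything thus reduces to the diameter-shrinkage claim that the coordinate-wise side lengths of every leaf of every $\Lambda\in\mathcal V_{\alpha,k,m}$ containing $x$ vanish in probability.

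The diameter-shrinkage argument proceeds in three steps. \emph{Depth lower bound:} rules~\ref{maxLeaves} and \ref{splitPoints} guarantee that sizes along any root-to-leaf path stay $\geq T\alpha^d$ after $d$ splits, while terminal leaves have fewer than $m$ observations by \ref{maxLeaves}; thus every leaf has depth at least $n_T := \lceil\log(T/m)/\log(\alpha^{-1})\rceil$, which tends to infinity by \ref{assumpMaxLeaves}. \emph{Split counts per direction:} rule~\ref{splitDirections} chooses each coordinate $i$ at each split with conditional probability at least some $\rho_{\min}>0$, so a Chernoff bound applied to the (conditionally independent) direction choices along the root-to-leaf path of $x$ gives, on a high-probability event, at least $\rho_{\min} n_T/2$ splits in each coordinate (with a union bound over the $p$ directions and over the $B$ trees). \emph{Per-split contraction:} rule~\ref{splitPoints} forces each such split to send an $\alpha$-fraction of parent observations to each side, and coupled with a uniform empirical-versus-stationary cdf estimate for $(Y_t)_{t\ge 1}$---available since \ref{noiseDist}--\ref{sparsity} make the chain strictly stationary and geometrically ergodic (hence strongly mixing with suitable Bernstein-type laws)---this implies that $F(R_i^{b})-F(L_i^{b})$, where $F$ is the stationary cdf of $Y_1$, shrinks by a factor of at most $1-\alpha+o(1)$ per split along direction $i$. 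After the asserted number of splits, $F(R_i^b)-F(L_i^b)\to 0$ in probability. Since $(L_i^b,R_i^b]\ni x_i$ and, by \ref{noiseDist} combined with \ref{sparsity}, the stationary density of $Y_1$ is positive everywhere, $F$ is strictly increasing at $x_i$ and continuity of $F^{-1}$ at $F(x_i)$ forces $R_i^b-L_i^b\to 0$ in probability. This proves~\ref{pointwise}.

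Part~\ref{meanSquare} follows from \ref{pointwise} by conditioning. Independence of $\varepsilon_1$ and $X_1$ gives $\mathbb E[Y\mid X]=f(X)$, and $(X,Y)$ is independent of $(\mathcal D_T,\Theta)$, so
\begin{equation*}
\mathbb P\bigl(\bigl|\hat f_T(X)-f(X)\bigr|>\delta\bigr)=\int \mathbb P\bigl(\bigl|\hat f_T(x)-f(x)\bigr|>\delta\bigr)\,\mathbb P_X(dx),
\end{equation*}
and the integrand, bounded by $1$, converges pointwise to $0$ by~\ref{pointwise}; bounded convergence then yields the conclusion. The principal obstacle lies in the per-split contraction step: the absence of standard copula-density assumptions on $X_1$---which are severely restrictive in the NLAR setting---forces us to convert the purely combinatorial balancing guarantees of \ref{maxLeaves}--\ref{splitPoints} into genuine metric shrinkage using the tail condition~\eqref{tailCond}, positivity of $h_\varepsilon$, and mixing estimates inherited from geometric ergodicity of $(Y_t)$.
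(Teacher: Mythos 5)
Your overall architecture is the same as the paper's: decompose $\hat f_T - f$ into the concentration term, handled by Corollary~\ref{RFconcentration} and \ref{leavesAssumption}, plus the bias of the partition-optimal forest, bound the latter by $C$ times the leaf diameter via \ref{Lip}, and reduce everything to shrinkage of the side lengths of the leaves containing $x$. Your depth lower bound and the per-direction split-count argument match the paper's Lemma~\ref{diameter}, and your derivation of part~\ref{meanSquare} from part~\ref{pointwise} by Fubini and bounded convergence is a correct (marginally cleaner) variant of the paper's Tonelli argument.

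The gap is the per-split contraction step, which is where the real work of Lemma~\ref{diameter} lies. Rule~\ref{splitPoints} constrains the fraction of the parent's \emph{observations}, i.e.\ the empirical measure of the $p$-dimensional child rectangle relative to its parent, and (after a uniform Glivenko--Cantelli step, which the paper takes from Adams--Nobel) the conditional probability $\mathbb{P}(X\in A^{\mathrm{child}}\mid X\in A^{\mathrm{parent}})\le 1-\alpha+o(1)$. For $p\ge 2$ this does \emph{not} yield your claim that the stationary marginal cdf length $F(R_i^{b})-F(L_i^{b})$ of the split coordinate contracts by $1-\alpha+o(1)$: the conditional law of coordinate $i$ given that the other coordinates lie in their current ranges need not resemble the marginal, so a substantial reduction in joint probability can correspond to an arbitrarily small reduction of the $i$-th side. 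Converting the joint-measure guarantee into coordinate-wise metric shrinkage is exactly the role of Lemma~\ref{copula}: one transforms $X$ coordinate-wise by $F_h$ for a specially constructed mixture density $h$ (not the stationary cdf of $Y_1$), so that the transformed vector has density bounded between $\zeta^{-1}$ and $\zeta$; the joint-probability reduction then forces the $F_h$-length of the split side to contract by a factor $1-\delta$ with $\delta$ of order $\zeta^{-2}\alpha$ (weaker than $1-\alpha$, but enough since $(1-\delta)^n\to 0$). Using the true stationary cdf $F$ instead would require the copula density of $X_1$ to be bounded away from $0$ and $\infty$, which is precisely the assumption the paper is designed to avoid and which fails generically in this NLAR setting (e.g.\ Gaussian noise with nonzero bounded $f$); conditions \eqref{tailCond} and \eqref{finiteRegression} give cdf-level, not density-level, comparisons, which is why the mixture $h$ of Lemma~\ref{copula} is needed. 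You flag this obstacle in your closing sentence but do not resolve it, so as written the diameter-shrinkage step, and with it part~\ref{pointwise}, is not established.
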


\begin{remark}\label{consistencyRemark}
	It should be emphasized that, since consistency is obtained through Theorem~\ref{puttingThingsTogether}, the averaging effect gained by considering \eqref{forestEstimate} rather than a single tree is not exploited in this setting. In particular, for the regression trees to concentrate around their partition-optimal counterparts, the number of observations in each leaf is required to approach infinity as $T$ becomes large (cf.\ \ref{leavesAssumption}). If this is not the case, averages within leaves do not converge, meaning that individual trees will be inconsistent estimators for $f$. In this case, consistency of $\hat{f}_T$ must be caused by improved accuracy gained by averaging trees.
\end{remark}

\section{A simulation study}\label{simulation}
In this section we consider a number of different specifications of $f$ in \eqref{mainObjective} and illustrate through simulations the results of Theorem~\ref{RFconsistency}. In all examples, the distribution of $\varepsilon_1$ is assumed to have a standard Laplace distribution, so that $h_\varepsilon (x) =\frac{1}{2} e^{-\vert x \vert}$ for $x\in \mathbb{R}$. As already mentioned, this choice meets the conditions imposed in \ref{noiseDist}. To keep things simple, we consider initially $p=1$ so that $f$ is one-dimensional and $(Y_t)_{t\geq 1}$ is a first order Markov chain. Within this setting, we choose four different specifications of $f$, namely
\begin{align}
\begin{aligned}\label{fSpecific}
\begin{array}{lll}
f(x) = 0.5 \sign (x)\min \{\vert x \vert , 10\}, &   & f(x) = -2x e^{-0.7x^2}+3x^2e^{-0.95x^2},\\
f(x) = \cos (5x)e^{-x^2} &  \text{and}  & f(x) =\min \{\vert x \vert, 0.75\} \min \{\vert x \vert,10 \}.
\end{array}
\end{aligned}
\end{align}
The first specification of $f$ satisfies $f(x) = 0.5x$ when $x\in [-10,10]$, and is constant outside of $[-10,10]$, and hence the corresponding process $(Y_t)_{t\geq 1}$ is intended to mimic the classical linear AR($1$) process. Indeed, it is very unlikely that $\vert Y_t \vert$ exceeds $10$, which means that there is only little practical difference between the two processes. The second specification is an example of an exponential AR model (see, e.g., \cite{an1996geometrical}), while the last two specifications of $f$ correspond to an oscillating function and a particular spline, respectively. In Figure~\ref{samplePaths}, we have simulated a sample path $Y_1,\dots, Y_{400}$ for each of these specifications of $f$. 

\begin{figure}
	\centering
	\begin{minipage}{0.485\textwidth}
		\begin{tikzpicture}
		\begin{axis}[
		axis lines=middle,
		axis on top = true,
		title={{\scriptsize $f(x) =0.5 \sign (x)\min \{\vert x \vert , 10\}$}},
		width=\linewidth, 
		xticklabel style={/pgf/number format/fixed},
		xmax=420,
		]
		\addplot[blue!50] table [col sep=comma,x=a,y=b,mark=none] {smallData2.csv};
		\end{axis}
		\end{tikzpicture}
	\end{minipage}
	\hfill
	\begin{minipage}{0.485\textwidth}
		\begin{tikzpicture}
		\begin{axis}[
		axis lines=middle, 
		axis on top = true, 
		title={{\scriptsize $f(x) = -2x e^{-0.7x^2}+3x^2e^{-0.95x^2}$}},
		width=\linewidth, 
		xticklabel style={/pgf/number format/fixed},
		xmax=420,
		]
		\addplot[blue!50] table [col sep=comma,x=a,y=c,mark=none] {smallData2.csv};
		\end{axis}
		\end{tikzpicture}
	\end{minipage}
	\\
	\begin{minipage}{0.485\textwidth}
		\begin{tikzpicture}
		\begin{axis}[axis lines=middle, 
		axis on top = true, 
		title={{\scriptsize $f(x) = \cos (5x)e^{-x^2}$}},
		width=\linewidth, 
		xticklabel style={/pgf/number format/fixed},
		xmax=420,
		]
		\addplot[blue!50] table [col sep=comma,x=a,y=d,mark=none] {smallData2.csv};
		\end{axis}
		\end{tikzpicture}
	\end{minipage}
	\hfill
	\begin{minipage}{0.485\textwidth}
		\begin{tikzpicture}
		\begin{axis}[
		axis lines=middle, 
		axis on top = true, 
		title={{\scriptsize $f(x) =\min \{\vert x \vert, 0.75\} \min \{\vert x \vert,10 \}$}},
		width=\linewidth, xticklabel style={/pgf/number format/fixed},
		xmax=420,
		]
		\addplot[blue!50] table [col sep=comma,x=a,y=e,mark=none] {smallData2.csv};
		\end{axis}
		\end{tikzpicture}
	\end{minipage}
	\caption{Simulations of $Y_1,\dots, Y_{400}$ from the model \eqref{mainObjective} for the four different specifications of $f$ considered in \eqref{fSpecific}.}\label{samplePaths}
\end{figure}
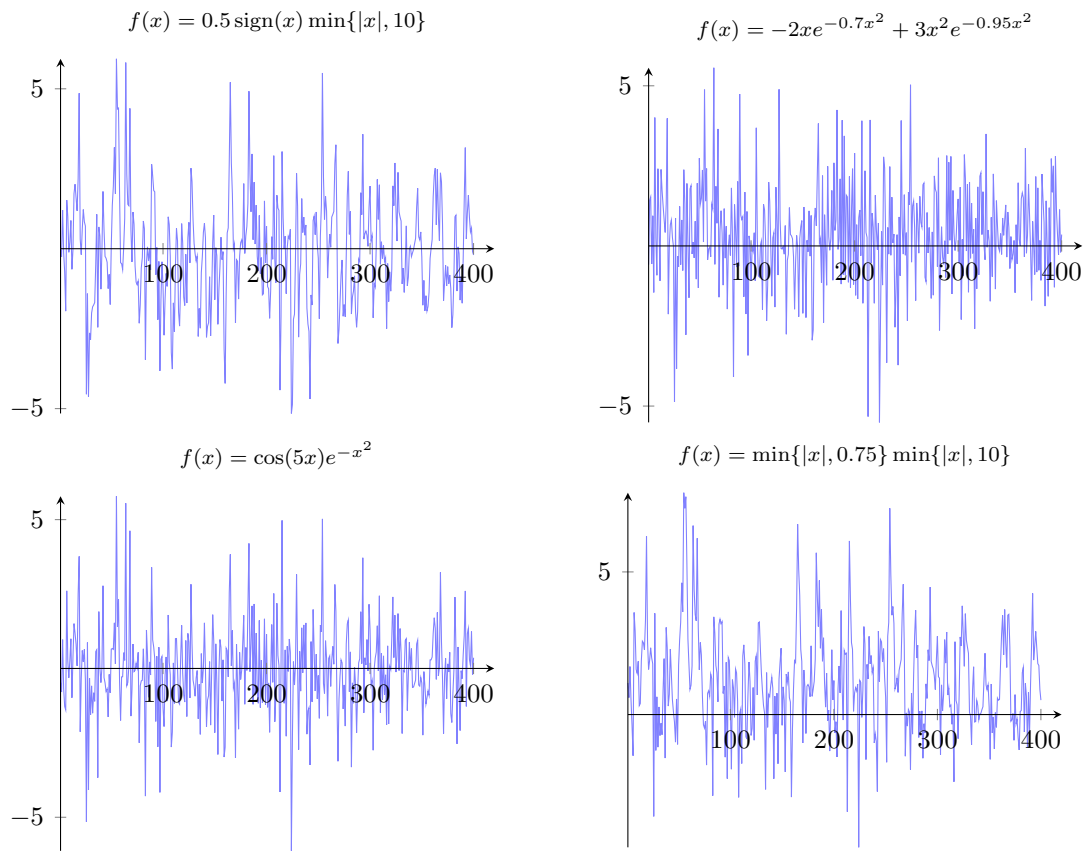

We consider estimation of $f$ by a random forest $\hat{f}_T$ across different sample sizes $T$ and we will be using the \pkg{ranger} package of R with $B = 500$ and $k = \lfloor 0.04(\log T)^4 \log\log T\rfloor$. To obtain diverse trees, we will use the extremely randomized trees of \citet{geurts2006extremely} which corresponds to setting the parameters $\texttt{replace} = \text{FALSE}$, $\texttt{sample.fraction} = 1$ and  $\texttt{splitrule} = \text{''extratrees''}$. Effectively, this means that split positions are chosen at random and that we build each tree using the entire sample $\mathcal{D}_T$ (no initial bootstrap step). Note that, while this implementation aligns with the $(\alpha, k,m)$-valid forests treated in Section~\ref{consistency}, $\alpha$ is not a prespecified parameter in the \pkg{ranger} package, yet in principle its value can be implicitly determined. In Figure~\ref{estimation} we compare $\hat{f}_T$ to $f$ on the interval $[-2,2]$ for each of the four different examples of $f$ presented in \eqref{fSpecific}.
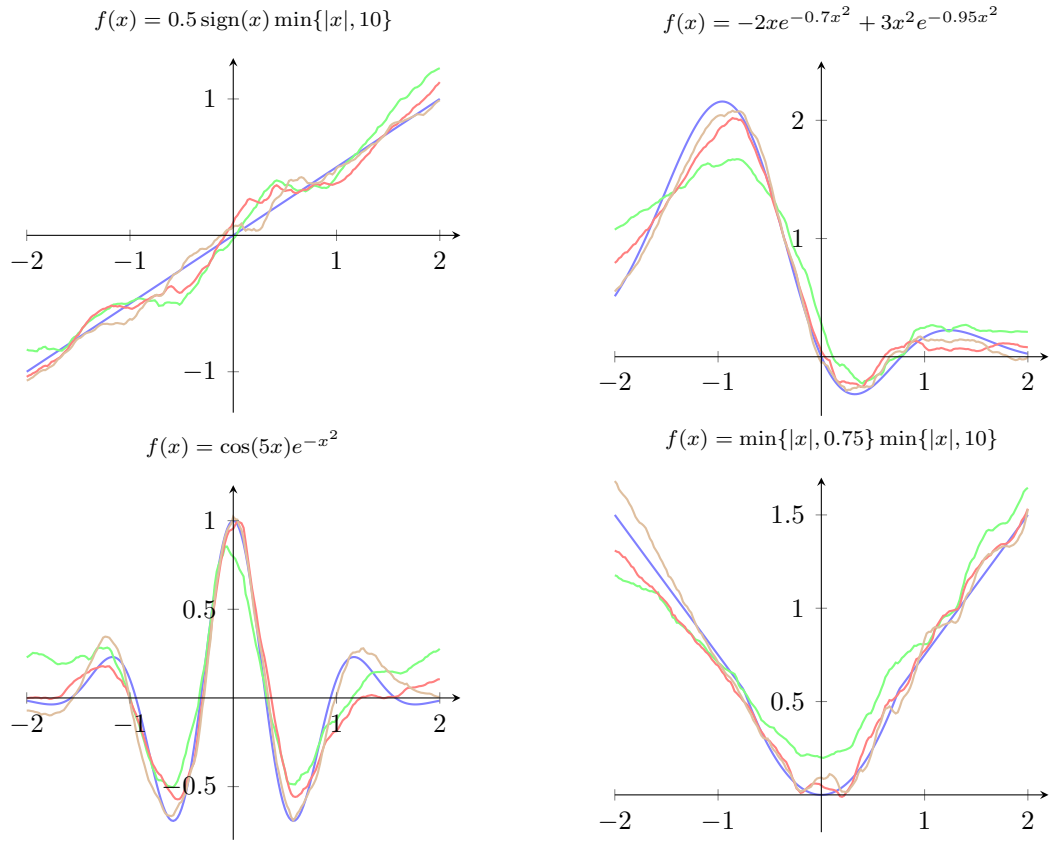
\begin{figure}
	\centering
	\begin{minipage}{0.485\textwidth}
		\begin{tikzpicture}
		\begin{axis}[axis lines=middle, 
		axis on top = true,
		axis line on top,
		title={{\scriptsize $f(x) = 0.5 \sign (x)\min \{\vert x \vert , 10\}$}},
		width=\linewidth, 
		xticklabel style={/pgf/number format/fixed},
		xmin=-2,
		xmax=2.2,
		ymin=-1.3,
		ymax=1.3,
		]
		\addplot[blue!50, thick] table [col sep=comma,x=a,y=b,mark=none] {functions.csv};
		\addplot[green!50, thick] table [col sep=comma,x=b,y=c,mark=none] {pred1.csv};
		\addplot[red!50, thick] table [col sep=comma,x=b,y=d,mark=none] {pred1.csv};
		\addplot[brown!50, thick] table [col sep=comma,x=b,y=e,mark=none] {pred1.csv};
		\end{axis}
		\end{tikzpicture}
	\end{minipage}
	\hfill
	\begin{minipage}{0.485\textwidth}
		\begin{tikzpicture}
		\begin{axis}[axis lines=middle,
		axis on top = true,
		axis line on top,
		title={{\scriptsize $f(x) = -2x e^{-0.7x^2}+3x^2e^{-0.95x^2}$}},
		width=\linewidth, 
		xticklabel style={/pgf/number format/fixed},
		xmin=-2,
		xmax=2.2,
		ymin=-0.5,
		ymax=2.5,
		]
		\addplot[blue!50, thick] table [col sep=comma,x=a,y=c,mark=none] {functions.csv};
		\addplot[green!50, thick] table [col sep=comma,x=b,y=c,mark=none] {pred2.csv};
		\addplot[red!50, thick] table [col sep=comma,x=b,y=d,mark=none] {pred2.csv};
		\addplot[brown!50, thick] table [col sep=comma,x=b,y=e,mark=none] {pred2.csv};
		\end{axis}
		\end{tikzpicture}
	\end{minipage}
	\\
	\begin{minipage}{0.485\textwidth}
		\begin{tikzpicture}
		\begin{axis}[axis lines=middle,
		axis on top = true,
		axis line on top,
		title={{\scriptsize $f(x) = \cos (5x)e^{-x^2}$}},
		width=\linewidth, 
		xticklabel style={/pgf/number format/fixed},
		xmin=-2,
		xmax=2.2,
		ymin=-0.8,
		ymax=1.2,
		]
		\addplot[blue!50, thick] table [col sep=comma,x=a,y=d,mark=none] {functions.csv};
		\addplot[green!50, thick] table [col sep=comma,x=b,y=c,mark=none] {pred3.csv};
		\addplot[red!50, thick] table [col sep=comma,x=b,y=d,mark=none] {pred3.csv};
		\addplot[brown!50, thick] table [col sep=comma,x=b,y=e,mark=none] {pred3.csv};
		\end{axis}
		\end{tikzpicture}
	\end{minipage}
	\hfill
	\begin{minipage}{0.485\textwidth}
		\begin{tikzpicture}
		\begin{axis}[axis lines=middle,
		axis on top = true,
		axis line on top,
		title={{\scriptsize $f(x) =\min \{\vert x \vert, 0.75\} \min \{\vert x \vert,10 \}$}},
		width=\linewidth, 
		xticklabel style={/pgf/number format/fixed},
		xmin=-2,
		xmax=2.2,
		ymin=-0.2,
		ymax=1.7,
		]
		\addplot[blue!50, thick] table [col sep=comma,x=a,y=e,mark=none] {functions.csv};
		\addplot[green!50, thick] table [col sep=comma,x=b,y=c,mark=none] {pred4.csv};
		\addplot[red!50, thick] table [col sep=comma,x=b,y=d,mark=none] {pred4.csv};
		\addplot[brown!50, thick] table [col sep=comma,x=b,y=e,mark=none] {pred4.csv};
		\end{axis}
		\end{tikzpicture}
	\end{minipage}
	\caption{The four specifications of $f$ considered in \eqref{fSpecific} (blue) and the corresponding random forest estimator $\hat{f}_T$ based on sample sizes of $T=400$ (green), $T=1600$ (red) and $T=6400$ (brown).}\label{estimation}
\end{figure}
While the plots indicate the consistency of the random forest estimator in these examples (as should be the case), observations are in fact rather noisy, and hence the performance of the random forest is indeed remarkable. To support this, Figure~\ref{scatters} shows scatter plots of the data $\mathcal{D}_T = \{(Y_0,Y_1),\dots, (Y_{T-1},Y_T)\}$ for $T=400$ and two specifications of $f$.
\begin{figure}
	\centering
	\begin{minipage}{0.485\textwidth}
		\begin{tikzpicture}
		\begin{axis}[
		axis lines=middle,
		title={{\scriptsize $f(x) = -2x e^{-0.7x^2}+3x^2e^{-0.95x^2}$}},
		axis on top = true,
		width=\linewidth, 
		xticklabel style={/pgf/number format/fixed},
		xmin=-3,
		xmax=3,
		ymin=-3,
		ymax=3,
		]
		\addplot[blue!50, thick] table [col sep=comma,x=a,y=b,mark=none] {functions2.csv};
		\addplot[gray!40, only marks, mark=*, mark size=0.7pt] table [col sep=comma,x=c,y=d,mark=*] {scatters2.csv};
		\end{axis}
		\end{tikzpicture}
	\end{minipage}
	\hfill
	\begin{minipage}{0.485\textwidth}
		\begin{tikzpicture}
		\begin{axis}[
		axis lines=middle,
		title={{\scriptsize $f(x) = \cos (5x)e^{-x^2}$}},
		axis on top = true,
		width=\linewidth, 
		xticklabel style={/pgf/number format/fixed},
		xmin=-2.5,
		xmax=2.5,
		ymin=-2.5,
		ymax=2.5,
		]
		\addplot[blue!50, thick] table [col sep=comma,x=a,y=c,mark=none] {functions2.csv};
		\addplot[gray!40, only marks, mark=*, mark size=0.7pt] table [col sep=comma,x=e,y=f,mark=*] {scatters2.csv};
		\end{axis}
		\end{tikzpicture}
	\end{minipage}
	\caption{Scatter plots of the data $\mathcal{D}_{400}$ under two of the specifications of $f$ considered in \eqref{fSpecific}.}\label{scatters}
\end{figure}
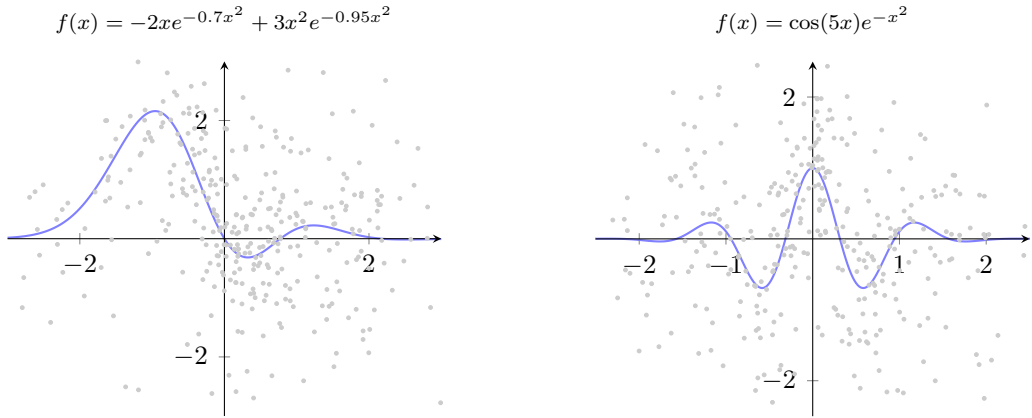
Furthermore, we note that choosing the parameter $k$ in finite samples is not a trivial task, and the choice used above is rather arbitrary (the assumption of \ref{leavesAssumption} concerns only its asymptotic behavior). Nevertheless, its value can have a significant impact on performance as it controls the bias--variance tradeoff of the estimator. While optimal tuning of $k$ is outside the scope of this paper, we illustrate its effect on $\hat{f}_T$ in Figure~\ref{leafEffect} where we estimate two of the functions in \eqref{fSpecific} for different values of $k$ using a sample of size $T = 1600$. For comparison, the value used for $k$ in Figure~\ref{estimation} when $T=1600$ was $236$.

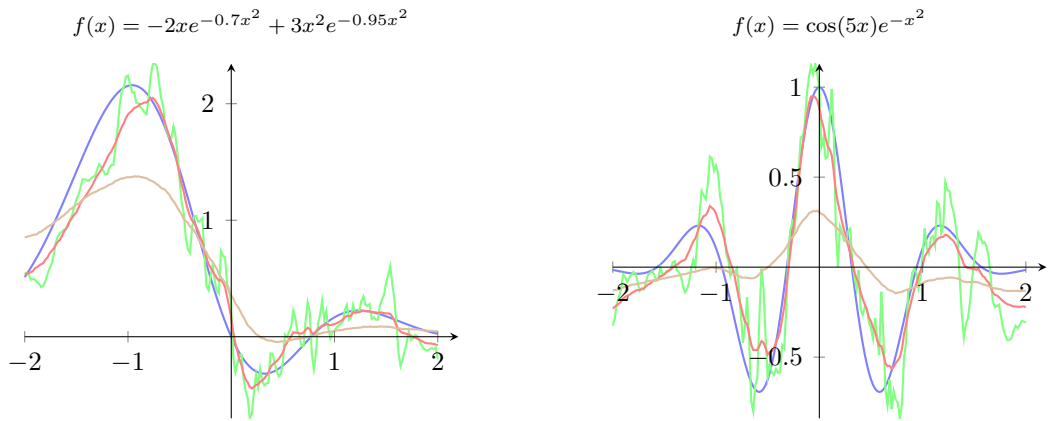
\begin{figure}
	\centering
	\begin{minipage}{0.485\textwidth}
		\begin{tikzpicture}
		\begin{axis}[
		axis lines=middle,
		axis on top = true,
		title={{\scriptsize $f(x) = -2x e^{-0.7x^2}+3x^2e^{-0.95x^2}$}},
		width=\linewidth, 
		xticklabel style={/pgf/number format/fixed},
		xmax=2.2,
		]
		\addplot[blue!50, thick] table [col sep=comma,x=a,y=c,mark=none] {functions.csv};
		\addplot[green!50, thick] table [col sep=comma,x=b,y=c,mark=none] {kSmooth1.csv};
		\addplot[red!50, thick] table [col sep=comma,x=b,y=d,mark=none] {kSmooth1.csv};
		\addplot[brown!50, thick] table [col sep=comma,x=b,y=e,mark=none] {kSmooth1.csv};
		\end{axis}
		\end{tikzpicture}
	\end{minipage}
	\hfill
	\begin{minipage}{0.485\textwidth}
		\begin{tikzpicture}
		\begin{axis}[
		axis lines=middle,
		axis on top = true,
		title={{\scriptsize $f(x) = \cos (5x)e^{-x^2}$}},
		width=\linewidth, 
		xticklabel style={/pgf/number format/fixed},
		xmax=2.2,
		]
		\addplot[blue!50, thick] table [col sep=comma,x=a,y=d,mark=none] {functions.csv};
		\addplot[green!50, thick] table [col sep=comma,x=b,y=c,mark=none] {kSmooth2.csv};
		\addplot[red!50, thick] table [col sep=comma,x=b,y=d,mark=none] {kSmooth2.csv};
		\addplot[brown!50, thick] table [col sep=comma,x=b,y=e,mark=none] {kSmooth2.csv};
		\end{axis}
		\end{tikzpicture}
	\end{minipage}
	\caption{Two of the specifications of $f$ considered in \eqref{fSpecific} (blue) and the corresponding random forest estimator $\hat{f}_{1600}$ with $k=40$ (green), $k=160$ (red) and $k=640$ (brown).}\label{leafEffect}
\end{figure}

We conclude this section by indicating consistency of random forests in a more challenging setting. In particular, we consider $p=2$ and the following choice of $f$:
\begin{equation}\label{twoDim}
f(x_1,x_2) = x_1e^{-0.6x_1^2}-2(x_1^2e^{-0.3x_1^2} +x_2e^{-0.7x_2^2})+3x_2^2e^{-0.95x_2^2}.
\end{equation}
We rely on the \pkg{ranger} package once again with the same specifications as were used to obtain Figure~\ref{estimation}, but we pass in the additional parameter $\texttt{split.select.weights}=(1/2,1/2)$ so that the probability of splitting along a given direction is the same for both directions (that is, $\rho_1 = \rho_2 = 1/2$). To evaluate its performance, we compute the mean squared error
\begin{equation}\label{MSE}
\MSE = \frac{1}{\vert \mathcal{X}\vert}\sum_{x \in \mathcal{X}} (\hat{f}_T(x)-f(x))^2
\end{equation}
over the grid $\mathcal{X}\coloneqq \{-2,-1.75,\dots, 1.75,2\}^2$ for different values of $T$. In Figure~\ref{MSEs}, the $\MSE$ is depicted as a function of $10^{-4}T$.

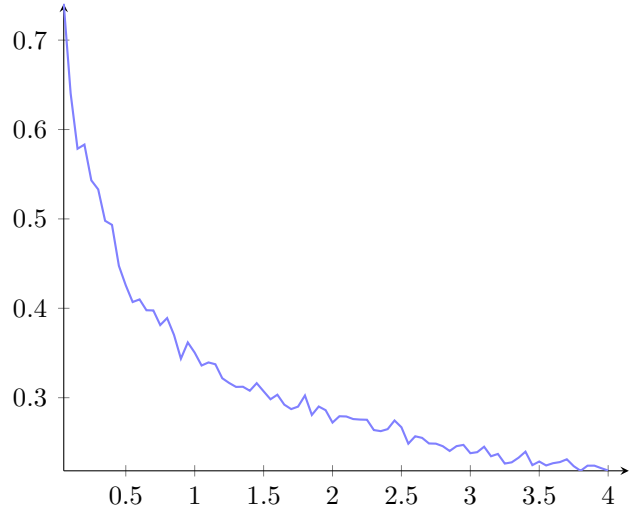
\begin{figure}
	\centering
	\begin{minipage}{0.6\textwidth}
		\begin{tikzpicture}
		\begin{axis}[
		axis lines=middle,
		width=\linewidth, 
		xticklabel style={/pgf/number format/fixed},
		xmax=4.15,
		]
		\addplot[blue!50, thick] table [col sep=comma,x=a,y=b,mark=none] {MSEs.csv};
		\end{axis}
		\end{tikzpicture}
	\end{minipage}
	\caption{The mean squared error \eqref{MSE} of the random forest estimator $\hat{f}_T$ as a function of $10^{-4}T$ when $f$ is given by \eqref{twoDim}.}\label{MSEs}
\end{figure}

\section{Proofs}\label{proofs}
It will be convenient to transform the input vector $X_t = (Y_{t-1},\dots, Y_{t-p})$ so that it takes values in $[0,1]^p$. Effectively, this can be done by applying a cumulative distribution function
\begin{equation}\label{CDF}
F_h(x) = \int_{-\infty}^x h(y)\, \dd y,\qquad x \in \mathbb{R}, 
\end{equation}
with $h\colon \mathbb{R}\to [0,\infty)$ being a probability density which is strictly positive almost everywhere. We extend the domain of $F_h$ to $\overline{\mathbb{R}} \coloneqq \mathbb{R}\cup \{\pm \infty\}$ by using the conventions $F_h(-\infty) = 0$ and $F_h(\infty) = 1$, so that the mapping
\begin{equation*}
\iota_h\colon (x_1,\dots, x_p)\longmapsto (F_h(x_1),\dots, F_h(x_p))
\end{equation*}
is one-to-one between $\smash{\overline{\mathbb{R}}}^p$ and $[0,1]^p$. The transformed input vector is defined by $Z_t = \iota_h(X_t)$. While there are no further restrictions on the choice of $h$, we will pick one that leads to good estimates on the density $h_Z$ of $Z_1$. 

\begin{lemma}\label{copula} Suppose that \ref{noiseDist} and \ref{sparsity} hold. Then there exists a constant $\zeta \in (1,\infty)$ and a probability density $h\colon \mathbb{R}\to [0,\infty)$, which is strictly positive almost everywhere, such that the density $h_Z\colon [0,1]^p \to [0,\infty)$ of $Z_1 = \iota_h(X_1)$ satisfies
	\begin{equation}\label{hZbound}
	\zeta^{-1} \leq h_Z(z)\leq \zeta
	\end{equation}
	for almost all $z\in [0,1]^p$.
\end{lemma}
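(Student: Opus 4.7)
The plan is to construct a suitable one-dimensional density $h$ and bound the density $h_Z$ of $Z_1 = \iota_h(X_1)$ from above and below via a change of variables, after first establishing pointwise sandwich bounds on the joint density $h_X$ of $X_1$. Setting
$$\overline h(y) := \sup_{|s|\leq M} h_\varepsilon(y-s), \qquad \underline h(y) := \inf_{|s|\leq M} h_\varepsilon(y-s),$$
I would first show that
$$\prod_{i=1}^p \underline h(x_i) \;\leq\; h_X(x_1,\dots,x_p) \;\leq\; \prod_{i=1}^p \overline h(x_i) \qquad \text{a.e.\ on } \mathbb{R}^p.$$
This uses the NLAR recursion $Y_t = f(X_t)+\varepsilon_t$ and $|f|\leq M$ from \ref{sparsity}: the density of $Y_t$ at $y$, conditional on the entire past, is $h_\varepsilon(y-f(X_t))$, which lies in $[\underline h(y),\overline h(y)]$ pointwise in $y$. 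Since tower expectation preserves such pointwise-in-$y$ envelopes, the chain-rule decomposition $h_X(x) = \pi(x_p)\prod_{k=1}^{p-1}p_k(x_k\mid x_{k+1},\dots,x_p)$ (with $\pi$ the stationary marginal density of $Y_0$ and each $p_k$ a conditional density obtained by integrating out the unseen pre-history) yields the product sandwich after bounding each factor separately.

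Next I would take $h := \overline h / \|\overline h\|_1$, verifying $\|\overline h\|_1 < \infty$ via the sub-exponential tail bound \eqref{subexponential} implied by \eqref{subGaussian}, and noting that positivity of $h$ almost everywhere is inherited from $h_\varepsilon$. The upper bound on $h_Z$ is then immediate: the change of variables $z = \iota_h(x)$ gives $h_Z(z) = h_X(x)/\prod_i h(x_i) \leq \|\overline h\|_1^p$. For the lower bound it suffices to show that
$$\underline h(y) \;\geq\; c_\ast\, \overline h(y) \qquad \text{for a.e.\ } y\in\mathbb{R} \text{ and some } c_\ast\in(0,1],$$
since then the sandwich gives $h_X(x) \geq c_\ast^p \prod \overline h(x_i)$ and hence $h_Z(z)\geq c_\ast^p\|\overline h\|_1^p$. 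Choosing $\zeta := \max\bigl(\|\overline h\|_1^p,\, c_\ast^{-p}\|\overline h\|_1^{-p}\bigr)$ then yields \eqref{hZbound}.

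The main obstacle is the inequality $\underline h \geq c_\ast \overline h$, which amounts to a uniform bound on the local log-oscillation of $h_\varepsilon$ across windows of length $2M$. I would establish it using \eqref{tailCond} with $\tau = 2M$: this controls the mass of $h_\varepsilon$ over such windows and, when combined with the a.e.\ positivity of $h_\varepsilon$ and the sub-exponential density estimates furnished by \ref{noiseDist}, can be upgraded to the desired pointwise density-ratio bound outside a Lebesgue-null exceptional set. Passing from the integrated CDF-level condition \eqref{tailCond} to a pointwise density-level inequality is the technical heart of the argument, and it is also what dictates which noise distributions \ref{noiseDist} effectively covers — distributions with too-light left tails, for which \eqref{tailCond} fails, cannot be handled by this approach.
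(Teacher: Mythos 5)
Your outline has a genuine gap exactly at the step you label the ``technical heart'', and that step cannot be repaired under \ref{noiseDist}--\ref{sparsity}: the pointwise ratio bound $\underline h(y)\geq c_\ast\,\overline h(y)$ a.e.\ does not follow from the assumptions, because \eqref{tailCond} is an \emph{integrated} (CDF-level) condition and carries no information about the local oscillation of the density $h_\varepsilon$ on windows of length $2M$. Concretely, start from a Laplace density (which satisfies \ref{noiseDist}) and carve increasingly deep, increasingly narrow ``notches'' into it on a set of arbitrarily small total mass, keeping it strictly positive a.e.; the moment condition \eqref{subGaussian} and the tail condition \eqref{tailCond} are essentially unaffected (the CDF moves by at most the excised mass, and on the far left tail one can leave the density untouched), yet $\overline h/\underline h$ becomes unbounded, so no $c_\ast>0$ exists. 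A second, smaller problem is the normalization of $h=\overline h/\lVert\overline h\rVert_1$: integrability of the running supremum $\overline h(y)=\sup_{|s|\leq M}h_\varepsilon(y-s)$ is also not implied by \ref{noiseDist} (tall spikes of width $o(2^{-n})$ have negligible mass but each contributes $\asymp 2M$ times its height to $\int\overline h$). So the proposal proves the lemma only under an additional regularity hypothesis on $h_\varepsilon$ (e.g.\ a uniform log-Lipschitz or monotone-tail condition), not under \ref{noiseDist}--\ref{sparsity} as stated.

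The paper's proof sidesteps pointwise density comparisons altogether, and this is the idea you are missing. It sets $\bar{\zeta}=\sup_x F_\varepsilon(x+M)/F_\varepsilon(x-M)$, which is finite directly by \eqref{tailCond}, and chooses $h$ to be the specific mixture $h(y)=\frac{1-\bar{\zeta}^{-1}}{\bar{\zeta}-\bar{\zeta}^{-1}}h_\varepsilon(y+M)+\frac{\bar{\zeta}-1}{\bar{\zeta}-\bar{\zeta}^{-1}}h_\varepsilon(y-M)$, so that by construction $\bar{\zeta}^{-1}F_\varepsilon(x+M)\leq F_h(x)\leq\bar{\zeta}F_\varepsilon(x-M)$ for all $x$; that is, the transformation itself is built from shifted copies of the noise law, and the needed two-sided comparison holds at the level of distribution functions with constant $\bar{\zeta}$, with no lower bound on densities ever required. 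The bound \eqref{hZbound} is then deduced by sandwiching $\mathbb{P}(F_h(Y_1)\leq z_1,\dots,F_h(Y_p)\leq z_p)$ between $\prod_i F_\varepsilon(F_h^{-1}(z_i)\mp M)$, using only $\varepsilon_i-M\leq Y_i\leq\varepsilon_i+M$ (from \ref{sparsity}), the independence of $\varepsilon_1,\dots,\varepsilon_p$, and monotonicity of $F_h$, and finally applying the displayed CDF comparison with $\zeta=\bar{\zeta}^p$. If you want to keep your density-level sandwich $\prod_i\underline h(x_i)\leq h_X(x)\leq\prod_i\overline h(x_i)$ (which is fine as far as it goes), the fix is to replace the envelopes by the paper's mixture $h$ and compare measures of rectangles through $F_h$ and $F_\varepsilon$, rather than attempting to upgrade \eqref{tailCond} to a pointwise statement about $h_\varepsilon$.
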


\begin{proof}
	By \eqref{tailCond} in \ref{noiseDist} it holds that
	\begin{equation}\label{zetaBar}
	\bar{\zeta} \coloneqq \sup_{x\in \mathbb{R}}\frac{F_\varepsilon (x+M)}{F_\varepsilon (x-M)}\in (1,\infty).
	\end{equation}
	It follows as well from \ref{noiseDist} that $\varepsilon_1$ admits a density $h_\varepsilon$ which is strictly positive almost everywhere, and hence
	\begin{equation*}
	h(y) \coloneqq \frac{1 - \bar{\zeta}^{-1}}{\bar{\zeta} - \bar{\zeta}^{-1}}h_\varepsilon (y+M) + \frac{\bar{\zeta} -1}{\bar{\zeta} - \bar{\zeta}^{-1}}h_\varepsilon (y-M),\qquad y \in \mathbb{R},
	\end{equation*}
	is a valid density to use for defining $Z_t = \iota_h (X_t)$. To show that $h_Z$ meets \eqref{hZbound}, it suffices to establish that
	\begin{equation}\label{generatingSet}
	\zeta^{-1}\prod_{i=1}^p z_i \leq \mathbb{P}(F_h (Y_1)\leq z_1,\dots, F_h(Y_p)\leq z_p) \leq \zeta \prod_{i=1}^p z_i
	\end{equation}
	for all $z_1,\dots, z_p \in [0,1]$, where $F_h$ is the cumulative distribution function defined by \eqref{CDF} and $\zeta = \bar{\zeta}^p$. Since $\varepsilon_i -M \leq Y_i \leq \varepsilon_i + M$ by \ref{sparsity}, it follows immediately from the independence of $\varepsilon_1,\dots, \varepsilon_p$ and the monotonicity of $F_h$ that
	\begin{equation*}
	\prod_{i=1}^pF_\varepsilon (F_h^{-1}(z_i)-M)\leq \mathbb{P}(F_h (Y_1)\leq z_1,\dots, F_h(Y_p)\leq z_p)\leq \prod_{i=1}^pF_\varepsilon (F_h^{-1}(z_i)+M).
	\end{equation*}
	Consequently, we only need to show that both $F_\varepsilon (F_h^{-1}(z)+M)\leq \bar{\zeta} z$ and $F_\varepsilon (F_h^{-1}(z)-M)\geq \bar{\zeta}^{-1}z$ for an arbitrary $z\in [0,1]$. Observe that, by \eqref{zetaBar} and the definition of $h$, $\bar{\zeta}^{-1}F_\varepsilon (x+M)\leq F_h (x)\leq \bar{\zeta} F_\varepsilon (x-M)$ for all $x\in\overline{\mathbb{R}}$. In particular, by choosing $x = F_h (z)^{-1}$ we obtain
	\begin{equation*}
	\bar{\zeta}^{-1}F_\varepsilon (F_h^{-1}(z)+ M)\leq z  \leq \bar{\zeta} F_\varepsilon (F_h^{-1}(z) - M),
	\end{equation*}
	and this completes the proof.
	
\end{proof}

\noindent In all of the following, $Z_t=\iota_h (X_t)$ for some $h$ such that \eqref{hZbound} holds, and we will be using the notation $\# R \coloneqq \vert \{t\in \{1,\dots T\}\, :\, Z_t \in R\}\vert$, $\mu (R) \coloneqq \mathbb{P}(Z_1\in R)$, and $\eta (R)\coloneqq \mathbb{E}[Y_1\mid Z_1 \in R]$ for any given measurable set $R\subseteq [0,1]^p$. Note that if $\Lambda \in \mathcal{V}_{k}$, the partition $\bar{\Lambda}$ of $\smash{\overline{\mathbb{R}}}^p$ obtained by the exact same sequence of consecutive splits is again a $k$-valid partition and 
\begin{equation}\label{firstEq}
T_\Lambda (x) - T^\ast_\Lambda (x) =  T_{\bar{\Lambda}}(x) - T_{\bar{\Lambda}}^\ast (x)
\end{equation}
for all $x\in \mathbb{R}^p$. Moreover, for any $x\in \mathbb{R}^p$ we have that
\begin{equation}\label{secondEq}
T_{\bar{\Lambda}}(x) - T_{\bar{\Lambda}}^\ast (x) = \frac{1}{\# L }\sum_{t\colon Z_t\in L}Y_t - \eta (L)\eqqcolon G_T (L),
\end{equation}
where $L= \iota_h (A_{\bar{\Lambda}}(x))$. Here, and in what follows, it is implicitly understood that a sum of the form $\sum_{t\colon Z_t \in R}$ runs over $\{t\in \{1,\dots, T\}\, :\, Z_t \in R\}$. Since $A$ is a leaf of a $k$-valid partition of $\smash{\overline{\mathbb{R}}}^p$ with respect to $(X_1,\dots, X_T)$ if and only if $\iota_h(A)$ is a leaf of a $k$-valid partition of $[0,1]^p$ with respect to $(Z_1,\dots, Z_T)$, it follows from \eqref{firstEq} and \eqref{secondEq} that
\begin{equation}\label{empiricalReduction}
\sup_{L\in \mathcal{L}_{k}} \vert G_T (L)\vert = \sup_{(x,\Lambda)\in \mathbb{R}^p\times  \mathcal{V}_{k}}\vert T_\Lambda (x) - T^\ast_\Lambda (x)\vert,
\end{equation}
where $\mathcal{L}_{k}$ consists of all sets which are members of $k$-valid partitions of $[0,1]^p$. In particular, it suffices to prove uniform concentration inequalities for empirical averages over rectangles in $\mathcal{L}_{k}$. Still, there are infinitely many rectangles in $\mathcal{L}_{k}$, so one cannot simply analyze $\vert G_T(L)\vert$ and then rely on a union bound. We will follow the ideas of \citet{wager2015adaptive} who demonstrated that one only needs to understand the concentration over a much smaller set of approximating rectangles. In particular, we will make use of one of their results which states that there exists a rather small collection of rectangles in $[0,1]^p$ containing good approximations to any non-negligible rectangle in terms of Lebesgue measure. Since their result is more general than what is needed here (e.g., it can be used in situations where $p \to \infty$), we state a rather simplified version in Theorem~\ref{approxSets} below. To avoid introducing too many non-informative constants in the following, we introduce some convenient notation. For two sequences $(a_t)_{t\geq 1}$ and $(b_t)_{t\geq 1}$ we will write $a_t\lesssim b_t$ if there exists a constant $c\geq 1$ such that $a_t\leq cb_t$ for all $t$. If both $a_t \lesssim b_t$ and $b_t\lesssim a_t$ we write $a_t \asymp b_t$.

\begin{theorem}[\citet{wager2015adaptive}]\label{approxSets}
	Let $\varepsilon \asymp k^{-1/2}$ and $w \asymp k/ T$. Then there exists a collection of rectangles $\mathcal{R}_{\varepsilon , w}$ with the following two properties:
	\begin{enumerate}[(i)]
		\item For any rectangle $R\subseteq [0,1]^p$ with $\Leb (R)\geq w$, one can find $R_-,R_+ \in \mathcal{R}_{\varepsilon,w}$ satisfying
		\begin{equation}\label{approximatingSets}
		R_-\subseteq R \subseteq R_+ \quad \text{and}\quad e^{-\varepsilon}\Leb (R_+)\leq \Leb (R) \leq e^{\varepsilon} \Leb (R_-).
		\end{equation}
		\item\label{Rcardinality} The cardinality $\vert \mathcal{R}_{\varepsilon,w}\vert$ of $\mathcal{R}_{\varepsilon , w}$ satisfies the bound $\log \vert \mathcal{R}_{\varepsilon, w}\vert \lesssim \log T$.
	\end{enumerate}
\end{theorem}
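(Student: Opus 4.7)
The plan is to construct $\mathcal{R}_{\varepsilon,w}$ as the collection of all axis-aligned rectangles $\prod_{i=1}^p[a_i,b_i]\subseteq[0,1]^p$ whose corner coordinates $a_i,b_i$ lie on a single uniform grid $\{0,\delta,2\delta,\dots\}\cap[0,1]$ with spacing $\delta\asymp\varepsilon w$ (hiding a $p$-dependent constant). Given any rectangle $R=\prod_{i=1}^p[a_i,b_i]$ with $\Leb(R)\geq w$, the outer approximation $R_+$ is built by snapping each $a_i$ down and each $b_i$ up to the nearest grid point, while the inner approximation $R_-$ is built by snapping each $a_i$ up and each $b_i$ down. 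By construction $R_-\subseteq R\subseteq R_+$ and both lie in $\mathcal{R}_{\varepsilon,w}$, so \eqref{approximatingSets} reduces to a volume estimate on the two side-length perturbations.

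For the volume bound, write $s_i=b_i-a_i$ and note that snapping changes each side length by at most $2\delta$. The crucial preliminary observation is that $R\subseteq[0,1]^p$ together with $\Leb(R)\geq w$ forces $s_i\geq w/\prod_{j\neq i}s_j\geq w$ on every axis $i$, so every side is already long enough for a grid of spacing $\delta\ll w$ to fit strictly inside $R$, making $R_-$ non-empty with side lengths at least $s_i-2\delta$. Then
\begin{equation*}
\frac{\Leb(R_+)}{\Leb(R)}\leq\prod_{i=1}^p\frac{s_i+2\delta}{s_i}\leq\exp\Bigl(2\delta\sum_{i=1}^p s_i^{-1}\Bigr),
\end{equation*}
with an analogous lower bound for $\Leb(R_-)/\Leb(R)$ obtained via $\log(1-x)\geq -2x$. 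The quantity $\sum_i s_i^{-1}$, subject to $\prod s_i\geq w$ and $s_i\leq 1$, is maximised when one coordinate is at $s_i=w$ and the others at $1$, giving $\sum_i s_i^{-1}\leq 1/w+(p-1)\lesssim 1/w$. Choosing $\delta=c\varepsilon w$ with a sufficiently small $p$-dependent constant $c$ then yields $\Leb(R_\pm)\in[e^{-\varepsilon},e^{\varepsilon}]\Leb(R)$, which is exactly \eqref{approximatingSets}.

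The cardinality count is then routine: the grid per axis has $\lceil 1/\delta\rceil+1\lesssim 1/(\varepsilon w)$ points and a rectangle is specified by $2p$ such choices, so $|\mathcal{R}_{\varepsilon,w}|\leq(1/(\varepsilon w))^{2p}$. Under the assumed scaling $\varepsilon\asymp k^{-1/2}$ and $w\asymp k/T$ one has $1/(\varepsilon w)\asymp T/\sqrt{k}$, hence $\log|\mathcal{R}_{\varepsilon,w}|\leq 2p\log(T/\sqrt{k})\lesssim\log T$ since $p$ is fixed. The delicate point I expect to be the main obstacle is the calibration of the grid scale: a grid with spacing proportional to $\varepsilon$ alone would approximate large sides well but fail completely for sides as short as $w$, while a grid finer than $\varepsilon w$ would blow up the cardinality. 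The volume pigeonhole $s_i\geq w$ is precisely what makes the choice $\delta\asymp\varepsilon w$ simultaneously sufficient for the multiplicative approximation and economical enough for the polylogarithmic cardinality bound.
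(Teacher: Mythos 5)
The paper offers no proof of Theorem~\ref{approxSets}: it is quoted, in simplified form, from \citet{wager2015adaptive}, so there is no internal argument to compare yours against. Taken on its own, your construction is correct for the fixed-$p$ statement given here. The key steps all check out: $\Leb(R)\ge w$ and $R\subseteq[0,1]^p$ force every side length $s_i\ge w$, so with grid spacing $\delta\asymp\varepsilon w$ the snapped rectangles satisfy $\Leb(R_+)/\Leb(R)\le\exp(2\delta\sum_i s_i^{-1})\le e^{\varepsilon}$ and similarly $\Leb(R_-)\ge e^{-\varepsilon}\Leb(R)$ (the crude bound $\sum_i s_i^{-1}\le p/w$ already suffices here, since $p$ is fixed), while the count $\lesssim (1/(\varepsilon w))^{2p}\asymp (T/\sqrt{k})^{2p}$ gives $\log\vert\mathcal{R}_{\varepsilon,w}\vert\lesssim p\log T\lesssim\log T$. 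Two points you should make explicit: you need $2\delta<w$, i.e.\ $2c\varepsilon<1$, so that $R_-$ is nonempty and the bound $\log(1-x)\ge-2x$ applies; this holds for all sufficiently large $T$ because \ref{leavesAssumption} forces $k\to\infty$ and hence $\varepsilon\to0$. Also, if rectangles are allowed to be half-open (as leaves of recursive partitions are), the inner containment $R_-\subseteq R$ can fail on a Lebesgue-null boundary; including the open/half-open variants of each grid rectangle (a factor $4^p$) removes this without affecting the cardinality bound. The difference from the cited source is that \citet{wager2015adaptive} prove a version intended for settings where the dimension may grow, which requires a more economical approximating family with sharper control of how the cardinality depends on the dimension; your single uniform grid is wasteful in $p$ but entirely adequate for the simplified fixed-$p$ version the paper actually uses.
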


\noindent Let $\varepsilon,w \in (0,1)$ be given as in Theorem~\ref{approxSets}. It follows that any given leaf $L\in \mathcal{L}_k^w \coloneqq \{L\in \mathcal{L}_{k}\, :\, \Leb (L) \geq w\}$ can be inner $\varepsilon$-approximated by a rectangle $L^\varepsilon_-$ from $\mathcal{R}_{\varepsilon,w}$ in the sense of \eqref{approximatingSets}. Moreover,
\begin{align}\label{decomp}
\begin{aligned}
\sup_{L\in \mathcal{L}^w_{k}} \vert G_T(L)\vert &\leq \sup_{L\in \mathcal{L}_{k}^w}\vert \eta ( L^\varepsilon_-) - \eta ( L)\vert + \sup_{L\in \mathcal{L}_{k}^w} \vert G_T(\smash{L^\varepsilon_-})\vert \\
&\quad + \sup_{L\in \mathcal{L}_{k}^w} \Bigl\vert\frac{1}{\# L}\sum_{t\colon Z_t\in L}Y_t - \frac{1}{\# L^\varepsilon_-}\sum_{t\colon Z_t\in L^\varepsilon_-}Y_t \Bigr\vert.
\end{aligned}
\end{align}
Thus, to obtain a concentration inequality for \eqref{empiricalReduction} it suffices to show that, for all large $T$ and with high probability, the three terms on the right-hand side of the inequality \eqref{decomp} are small and $\mathcal{L}_{k} = \mathcal{L}_{k}^w$. Bounding the first term of \eqref{decomp} is the easiest task.
\begin{lemma}\label{firstTerm} Suppose that \ref{noiseDist} and \ref{sparsity} are satisfied, and let $\varepsilon \asymp k^{-1/2}$ and $w \asymp k/ T$. Then
	\begin{equation*}
	\sup_{L\in \mathcal{L}_{k}^w}\vert \eta (L^\varepsilon_-)- \eta ( L)\vert \leq 2M\zeta^{2} \varepsilon,
	\end{equation*}
	where $\zeta\in (1,\infty)$ is given as in Lemma~\ref{copula}.
\end{lemma}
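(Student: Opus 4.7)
The plan is to reduce the statement to a purely geometric estimate on the ratio $\mu(L\setminus L^\varepsilon_-)/\mu(L)$, which can then be controlled by combining the two-sided density bound of Lemma~\ref{copula} with the inner Lebesgue approximation guaranteed by Theorem~\ref{approxSets}. The crucial preliminary observation is that, although $Y_1$ itself is not bounded, the conditional mean $\eta(R)$ is uniformly bounded by $M$.

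First I would exploit that $X_1 = \xi$ is independent of $\varepsilon_1$ by assumption, so $\mathbb{E}[\varepsilon_1 \mathds{1}_{\{Z_1\in R\}}] = 0$ for every measurable $R\subseteq[0,1]^p$. Consequently,
\begin{equation*}
\eta(R) = \mathbb{E}[f(X_1)\mid Z_1\in R], \qquad |\eta(R)|\leq M, \qquad |\mathbb{E}[Y_1\mathds{1}_{\{Z_1\in R\}}]|\leq M\mu(R),
\end{equation*}
where the last two bounds use assumption~\ref{sparsity}.

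Next, since $L^\varepsilon_-\subseteq L$, I would split $\mathbb{E}[Y_1\mathds{1}_{\{Z_1\in L\}}]$ as a sum over $L^\varepsilon_-$ and $L\setminus L^\varepsilon_-$ to obtain the elementary identity
\begin{equation*}
\eta(L^\varepsilon_-) - \eta(L) = \eta(L^\varepsilon_-)\,\frac{\mu(L\setminus L^\varepsilon_-)}{\mu(L)} - \frac{\mathbb{E}[Y_1\mathds{1}_{\{Z_1\in L\setminus L^\varepsilon_-\}}]}{\mu(L)}.
\end{equation*}
Plugging in the bounds from the previous step yields $|\eta(L^\varepsilon_-) - \eta(L)|\leq 2M\,\mu(L\setminus L^\varepsilon_-)/\mu(L)$ uniformly in $L\in\mathcal{L}_k^w$.

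To finish, I would invoke Lemma~\ref{copula} to get $\mu(L\setminus L^\varepsilon_-)\leq\zeta\,\Leb(L\setminus L^\varepsilon_-)$ and $\mu(L)\geq\zeta^{-1}\Leb(L)$, combined with the inner approximation $\Leb(L^\varepsilon_-)\geq e^{-\varepsilon}\Leb(L)$ from Theorem~\ref{approxSets}. The elementary inequality $1-e^{-\varepsilon}\leq\varepsilon$ then delivers $\mu(L\setminus L^\varepsilon_-)/\mu(L)\leq\zeta^2\varepsilon$, which gives the claim. The only step deserving genuine care is the very first one: because $|Y_1|$ is unbounded, naively estimating $\eta(R)$ would introduce moment terms in $\varepsilon_1$ and the clean constant $2M\zeta^2$ would not survive — the independence of $\xi$ and $(\varepsilon_t)_{t\geq 1}$ is essential.
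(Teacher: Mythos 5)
Your proof is correct and follows essentially the same route as the paper: both arguments reduce the claim to the bound $\vert \eta(L^\varepsilon_-)-\eta(L)\vert \leq 2M\,\mu(L\setminus L^\varepsilon_-)/\mu(L)$ (your identity is just a cosmetic rearrangement of the paper's two-term split) and then control this ratio by $\zeta^{2}(1-e^{-\varepsilon})\leq \zeta^{2}\varepsilon$ via the density bounds of Lemma~\ref{copula} and the inner Lebesgue approximation of Theorem~\ref{approxSets}. Your explicit preliminary step that $\eta(R)=\mathbb{E}[f(X_1)\mid Z_1\in R]$, using the mean-zero noise independent of $X_1$, is the same fact the paper uses implicitly when it writes the conditional means as integrals of $f(\iota_h^{-1}(z))h_Z(z)$, so there is no substantive difference.
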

\begin{proof}
	By \ref{sparsity}, we find for an arbitrary leaf $L\in \mathcal{L}_{k}^w$ that
	\begin{align*}
	\vert \eta (L^\varepsilon_-)- \eta ( L)\vert &\leq \frac{1}{\mu (L)}\int_{L\setminus L^\varepsilon_-} \vert f(\iota_h^{-1}(z))\vert h_Z (z)\, \dd z\\
	&\quad + \frac{\mu (L) - \mu (L^\varepsilon_-)}{\mu (L) \mu (L^\varepsilon_-)}\int_{L_-^\varepsilon} \vert f(\iota_h^{-1}(z))\vert h_Z (z)\, \dd z \\
	&\leq 2M\frac{\mu (L)- \mu (L^\varepsilon_-)}{\mu (L)}.
	\end{align*}
	Moreover, Lemma~\ref{copula} and \eqref{approximatingSets} imply
	\begin{equation*}
	\mu (L)- \mu (L^\varepsilon_-) \leq \zeta (1-e^{-\varepsilon})\lambda (L) \leq \zeta^2 \varepsilon \mu (L),
	\end{equation*} 
	and this concludes the proof.
\end{proof}

\noindent The key to obtain estimates of the second and third term of \eqref{decomp}, as well as showing that $\mathcal{L}_{k} =\mathcal{L}^w_{k}$, with high probability is to establish good concentration inequalities for the counts $\# L$ and $\# \smash{L^\varepsilon_-}$, which apply across all $L\in \mathcal{L}_{k}^w$. As we will see in later proofs, by relying on Theorem~\ref{approxSets} and ideas similar to \cite[Theorem~10 and~Lemma~13]{wager2015adaptive}, it suffices to understand the concentration of $\# R$ across all rectangles in $\mathcal{R}_{\varepsilon,w}$ of non-negligible volume. This is the motivation for the following result, which relies on a Bernstein type inequality for weakly dependent processes.
\begin{lemma}\label{eventE}
	Suppose that \ref{noiseDist}--\ref{leavesAssumption} are satisfied, and let $\varepsilon \asymp k^{-1/2}$ and $w\asymp k/T$. Then there exists a constant $\gamma\in (0,\infty)$ such that
	\begin{equation}\label{s0}
	\sup_{R\in \mathcal{R}_{\varepsilon, w}\colon \mu (R)\geq w}\frac{\vert \# R - T \mu (R)\vert }{\sqrt{T \mu (R)}} \leq \gamma  \log T
	\end{equation}
	with probability at least $1-T^{-1}$ for all sufficiently large $T$.
\end{lemma}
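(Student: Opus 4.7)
The plan is, for each fixed rectangle $R\in\mathcal{R}_{\varepsilon,w}$ with $\mu(R)\geq w$, to apply a Bernstein-type inequality for strongly mixing sequences to the centered indicator sum $\sum_{t=1}^T(\mathds{1}\{Z_t\in R\}-\mu(R))$, and then to close the argument with a union bound that exploits the cardinality estimate $\log|\mathcal{R}_{\varepsilon,w}|\lesssim\log T$ from Theorem~\ref{approxSets}.

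The key input is that under \ref{noiseDist} and \ref{sparsity} the Markov chain $(Y_t)_{t\geq 1}$ is geometrically ergodic (as noted after the statement of the assumptions, invoking \cite{an1996geometrical}), so the stationary process $(Z_t)_{t\geq 1}$ is strongly mixing with geometrically decaying mixing coefficients. For a fixed $R$, the indicators $\mathds{1}\{Z_t\in R\}$ take values in $[0,1]$ and have variance at most $\mu(R)$. A Bernstein-type inequality for geometrically strongly mixing, bounded sequences (in the spirit of Merlev\`ede--Peligrad--Rio) then yields a bound of the form
\begin{equation*}
\mathbb{P}\bigl(\vert \#R - T\mu(R)\vert > s\bigr) \leq c_1 \exp\Bigl(-\frac{c_2 s^2}{T\mu(R) + s\log T}\Bigr), \qquad s>0,
\end{equation*}
with constants $c_1,c_2\in(0,\infty)$ that depend only on the mixing rate and on the uniform bound of the indicators, and crucially not on $R$.

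Setting $s = \gamma(\log T)\sqrt{T\mu(R)}$ for some $\gamma>0$ to be chosen and combining $\mu(R)\geq w \asymp k/T$ with \ref{leavesAssumption} gives $T\mu(R)\gtrsim k \geq \gamma^2(\log T)^4$ for all sufficiently large $T$, which implies $s\log T \leq T\mu(R)$. The variance term therefore dominates the denominator in the Bernstein bound, and it reduces to
\begin{equation*}
\mathbb{P}\Bigl(\frac{\vert \#R - T\mu(R)\vert}{\sqrt{T\mu(R)}} > \gamma\log T\Bigr) \leq c_1 \exp\bigl(-c_3 \gamma^2 (\log T)^2\bigr)
\end{equation*}
for some $c_3>0$. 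A union bound over $\mathcal{R}_{\varepsilon,w}$, whose cardinality satisfies $\vert \mathcal{R}_{\varepsilon,w}\vert \leq T^{c_4}$ for some $c_4$ by Theorem~\ref{approxSets}, then bounds the probability that \eqref{s0} fails by $c_1 T^{c_4}\exp(-c_3\gamma^2(\log T)^2)$, which drops below $T^{-1}$ for all large $T$ once $\gamma$ is taken sufficiently large.

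The main technical obstacle will be producing a Bernstein-type inequality of the correct form with explicit, $R$-independent constants for the process at hand. This requires first verifying that $(Z_t)_{t\geq 1}$ inherits geometric strong mixing from the geometric ergodicity of the underlying NLAR($p$) chain, and then carefully extracting a suitable Bernstein-type estimate for bounded, geometrically mixing sequences. The extra logarithmic factor in the denominator of the Bernstein bound, compared with the classical i.i.d.\ Bernstein inequality, is precisely what forces the assumption $k/(\log T)^4\to\infty$: without this lower bound on $k$, the term $s\log T$ could dominate $T\mu(R)$ and the resulting tail would be too poor to survive the union bound of size $T^{c_4}$.
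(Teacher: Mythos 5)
Your overall architecture --- a per-rectangle Bernstein-type inequality for the geometrically $\alpha$-mixing sequence $(\mathds{1}_R(Z_t))_{t\geq 1}$ followed by a union bound over $\mathcal{R}_{\varepsilon,w}$ via $\log\vert\mathcal{R}_{\varepsilon,w}\vert\lesssim\log T$ --- is exactly the paper's proof. The one genuine gap is the form of the Bernstein bound you assume. You justify the variance term $T\mu(R)$ by noting that each indicator has variance at most $\mu(R)$, but for dependent summands the inequality of Merlev\`ede--Peligrad--Rio (the paper invokes their Theorem~2) involves the covariance-inclusive proxy $\nu_R^2=\Var(\mathds{1}_R(Z_1))+2\sum_{t\geq 1}\vert\Cov(\mathds{1}_R(Z_{t+1}),\mathds{1}_R(Z_1))\vert$, and its linear term carries $(\log T)^2$, not $\log T$. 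Since $\vert\Cov(\mathds{1}_R(Z_{t+1}),\mathds{1}_R(Z_1))\vert\leq\min\{\alpha(t),\mu(R)\}$ and the geometrically decaying $\alpha(t)$ only drops below $\mu(R)\gtrsim k/T$ after $t\gtrsim\log(T/k)$ steps, the best uniform bound is $\nu_R^2\lesssim\mu(R)\log T$; this extra logarithm, which your ``variance at most $\mu(R)$'' step silently discards, is precisely the estimate the paper must make in \eqref{varProxyBound}. As written, the intermediate tail bound you rely on is therefore not available from the cited-type mixing inequalities.

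The correction does not break your scheme, it only weakens the exponent: with $s=\gamma\log T\sqrt{T\mu(R)}$ and $T\mu(R)\gtrsim k\gg(\log T)^4$ (assumption \ref{leavesAssumption} is what keeps the $s(\log T)^2$ term subdominant), the per-rectangle tail becomes of order $\exp(-c\gamma\log T)$ rather than your $\exp(-c\gamma^2(\log T)^2)$, which still beats the union bound over at most $T^{c_4}$ rectangles once $\gamma$ is chosen sufficiently large --- exactly the paper's bookkeeping. So either prove the stronger inequality you wrote down (which would require controlling the covariance series at scale $\mu(R)$ without the logarithm, something the off-the-shelf mixing inequalities do not give), or simply rerun your computation with the proxy $\mu(R)\log T$ and the $(\log T)^2$ linear term; with that repair your argument coincides with the paper's.
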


\begin{proof}
	Note that, by a union bound, it suffices to establish that for any $R\in \mathcal{R}_{\varepsilon,w}$ with $\mu (R)\geq w$,
	\begin{equation}\label{s1}
	\mathbb{P}\biggl(\Bigl\vert \frac{\# R}{T}-\mu (R) \Bigr\vert > \gamma \log T\sqrt{\frac{\mu (R)}{T}}\biggr)\leq \frac{1}{\vert \mathcal{R}_{\varepsilon,w}\vert T}.
	\end{equation}
	To this end, observe that $(Y_t)_{t\geq 1}$ forms a stationary geometrically ergodic $p$-th order Markov chain (cf.\ \cite[Theorem~3.1]{an1996geometrical}). It is well-known that any such chain is exponentially $\alpha$-mixing (see, e.g., \cite[p.~89]{doukhan2012mixing}). In particular, the $t$-th $\alpha$-mixing coefficient $\alpha (t) \coloneqq \sup_{A\in \sigma (X_1),\, B\in \sigma (X_{t+1})} \vert \mathbb{P}(A\cap B) - \mathbb{P}(A)\mathbb{P}(B)\vert$ of $X_t = (Y_{t-1},\dots, Y_{t-p})$ satisfies
	\begin{equation}\label{alphaMixing}
	\log \alpha (t)  \lesssim - t,\qquad t \geq 1.
	\end{equation}
	Moreover, the $\alpha$-mixing coefficients of $(\mathds{1}_R(Z_t))_{t\geq 1}$ are obviously bounded by $(\alpha (t))_{t\geq 0}$ (which do not depend on $R$), and thus we can rely on a Bernstein type inequality for weakly dependent sequences \cite[Theorem~2]{merlevede2009bernstein} to establish that
	\begin{equation}\label{Bernstein}
	\log \mathbb{P}\Bigl(\Bigl\vert \frac{\# R}{T}-\mu (R) \Bigr\vert > x\Bigr) \lesssim
	- \frac{x^2 T}{\nu^2_{R}  + T^{-1} + x(\log T)^2},\qquad x>0,
	\end{equation}
	where 
	\begin{equation*}
	\nu^2_{R}\coloneqq \Var (\mathds{1}_R(Z_1)) + 2 \sum_{t=1}^\infty \vert \text{Cov} (\mathds{1}_R(Z_{t+1}),\mathds{1}_R(Z_1))\vert.
	\end{equation*}
	It is easy to see that $\vert \Cov (\mathds{1}_R(Z_{t+1}),\mathds{1}_R(Z_1))\vert\leq  \min \{\alpha(t) , \mu (R)\}$. From this inequality and the fact that $\sqrt{\alpha (t)} \leq \mu (R)$ as long as $t\gtrsim \log (T/k)$, which follows from \eqref{alphaMixing} and $\mu (R)\gtrsim k/T$, we deduce that
	\begin{equation}\label{varProxyBound}
	\nu_R^2 \lesssim 
	\mu (R)\Bigl(1+\log (T/k) + \sum_{t=1}^\infty \sqrt{\alpha (t)} \Bigr) 
	\lesssim \mu (R) \log T.
	\end{equation}
	By combining this variance bound with inequality \eqref{Bernstein} and using that $\mu (R) \gtrsim 1/T$ we get
	\begin{equation}\label{bernsteinWeakDep}
	\log \mathbb{P}\Bigl(\Bigl\vert \frac{\# R}{T}-\mu (R) \Bigr\vert > x\Bigr) \lesssim - 
	\frac{x^2T}{\max\{\mu (R)\log T , x (\log T)^2\}}.
	\end{equation}
	To put it differently, we may choose a sufficiently large constant $\bar{\gamma}$ such that for any fixed $\tau \in (0,\infty)$,
	\begin{equation}\label{probabilityBound}
	\mathbb{P}\Bigl(\Bigl\vert \frac{\# R}{T}-\mu (R) \Bigr\vert >x \Bigr) \leq \frac{1}{\tau}
	\end{equation}
	if
	\begin{equation}\label{condition}
	x\geq \bar{\gamma} \max\biggl\{\frac{(\log T)^2\log \tau}{T},\sqrt{\frac{\mu (R)}{T}\log T\log \tau} \biggr\}.
	\end{equation}
	Since $\mu(R)\gtrsim k/T$, the maximum of \eqref{condition} is equal to its last term if
	\begin{equation*}
	k\geq \kappa (\log T)^3\log \tau
	\end{equation*}
	for a suitable constant $\kappa$. Moreover, if $\tau = \vert \mathcal{R}_{\varepsilon,w}\vert T$, Theorem~\ref{approxSets}\ref{Rcardinality} shows that $\log \tau \lesssim \log T$, so if $k$ is chosen in accordance with \ref{leavesAssumption}, the last term of the maximum in \eqref{condition} is the dominating one when $T$ is large. Consequently, by choosing $x$ to be the right-hand side of \eqref{condition} with $\tau = \vert \mathcal{R}_{\varepsilon,w}\vert T$, we obtain
	\begin{equation*}
	\mathbb{P}\biggl(\Bigl\vert \frac{\# R}{T}-\mu (R) \Bigr\vert > \bar{\gamma} \sqrt{\frac{\mu (R)}{T}\log T (\log \vert \mathcal{R}_{\varepsilon, w}\vert + \log T)}\biggr)\leq \frac{1}{\vert \mathcal{R}_{\varepsilon,w}\vert T}.
	\end{equation*}
	By using Theorem~\ref{approxSets}\ref{Rcardinality} once again it follows that \eqref{s1} is satisfied for a suitable constant $\gamma$ and verifies that \eqref{s0} holds with probability at least $1-T^{-1}$ for all sufficiently large $T$.
\end{proof}

\noindent The next result shows how the inequality \eqref{s0} impacts the magnitude of the third term of \eqref{decomp}.

\begin{lemma}\label{secondTerm}
	Suppose that \ref{noiseDist}--\ref{leavesAssumption} are satisfied, and let $\varepsilon = k^{-1/2}$ and $w = k/(4\zeta T)$ where $\zeta\in (1,\infty)$ is given as in Lemma~\ref{copula}. Then, the inequality \eqref{s0} implies
	\begin{equation*}
	\sup_{L\in \mathcal{L}_k^w} \Bigl\vert\frac{1}{\# L}\sum_{t\colon Z_t\in L}Y_t - \frac{1}{\# \smash{L^\varepsilon_-}}\sum_{t\colon Z_t\in \smash{L^\varepsilon_-}}Y_t \Bigr\vert \leq 6(M+ \max_{t=1,\dots, T} \vert \varepsilon_t \vert)\frac{\zeta^2 + 2 \gamma \log T}{\sqrt{k}}
	\end{equation*}
	for all sufficiently large $T$, where $M$ is given by \eqref{finiteRegression}.
\end{lemma}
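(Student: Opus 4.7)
The first move is to express the difference in a form that exposes the boundary mismatch between $L$ and $L^\varepsilon_-$. Setting $S_L \coloneqq \sum_{t\colon Z_t \in L}Y_t$ and $S_{L^\varepsilon_-} \coloneqq \sum_{t\colon Z_t \in L^\varepsilon_-}Y_t$, the identity
\begin{equation*}
\frac{S_L}{\#L} - \frac{S_{L^\varepsilon_-}}{\#L^\varepsilon_-} = \frac{1}{\#L}\sum_{t\colon Z_t \in L\setminus L^\varepsilon_-}Y_t - S_{L^\varepsilon_-}\frac{\#L - \#L^\varepsilon_-}{\#L\cdot \#L^\varepsilon_-},
\end{equation*}
together with the trivial bound $\vert Y_t\vert \le M+\max_t\vert \varepsilon_t \vert$ (a consequence of \ref{sparsity}), readily yields
\begin{equation*}
\biggl\vert \frac{S_L}{\#L} - \frac{S_{L^\varepsilon_-}}{\#L^\varepsilon_-}\biggr\vert \le 2(M+\max_t\vert \varepsilon_t\vert)\frac{\#L - \#L^\varepsilon_-}{\#L}.
\end{equation*}
Thus the lemma reduces to showing $(\#L - \#L^\varepsilon_-)/\#L \lesssim (\zeta^2 + \gamma \log T)/\sqrt{k}$ uniformly in $L \in \mathcal{L}_k^w$.

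To control the boundary count $\#L - \#L^\varepsilon_-$, I invoke Theorem~\ref{approxSets} once more to extract an outer approximation $L^\varepsilon_+\in \mathcal{R}_{\varepsilon,w}$ with $L^\varepsilon_- \subseteq L \subseteq L^\varepsilon_+$, so that $\#L - \#L^\varepsilon_- \le \#L^\varepsilon_+ - \#L^\varepsilon_-$. Splitting
\begin{equation*}
\#L^\varepsilon_+ - \#L^\varepsilon_- = \bigl(\#L^\varepsilon_+ - T\mu(L^\varepsilon_+)\bigr) + T\bigl(\mu(L^\varepsilon_+) - \mu(L^\varepsilon_-)\bigr) + \bigl(T\mu(L^\varepsilon_-) - \#L^\varepsilon_-\bigr),
\end{equation*}
I handle the three pieces separately. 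Since $\Leb(L)\ge w$, combining Theorem~\ref{approxSets} and Lemma~\ref{copula} gives $\mu(L^\varepsilon_\pm) \gtrsim k/T$, so \eqref{s0} controls the two stochastic terms by $\gamma\log T\sqrt{T\mu(L^\varepsilon_\pm)}$. For the deterministic middle term I combine $h_Z\le \zeta$ from Lemma~\ref{copula} with $\Leb(L^\varepsilon_+)-\Leb(L^\varepsilon_-) \le (e^\varepsilon - e^{-\varepsilon})\Leb(L)$ from Theorem~\ref{approxSets} to obtain $T(\mu(L^\varepsilon_+) - \mu(L^\varepsilon_-)) \lesssim \zeta^2 \varepsilon\, T\mu(L^\varepsilon_+)$.

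The main obstacle is the comparison of $T\mu(L^\varepsilon_+)$ with $\#L$: since $L$ itself is not an element of $\mathcal{R}_{\varepsilon,w}$, no direct concentration statement is available for it. I resolve this by applying \eqref{s0} to the inner approximation $L^\varepsilon_-$ and inverting the resulting quadratic $T\mu(L^\varepsilon_-) \le \#L^\varepsilon_- + \gamma\log T\sqrt{T\mu(L^\varepsilon_-)}$ to obtain $T\mu(L^\varepsilon_-)\lesssim \#L + (\log T)^2$; the logarithmic correction is then absorbed into $\#L \ge k$ via \ref{leavesAssumption}. Combining this with the ratio bound $\mu(L^\varepsilon_+) \le \zeta^2 e^{2\varepsilon}\mu(L^\varepsilon_-)$, which follows from Lemma~\ref{copula} and Theorem~\ref{approxSets}, yields $T\mu(L^\varepsilon_+)\lesssim \zeta^2\#L$. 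Inserting this back into the decomposition and dividing by $\#L \ge k$ leaves
\begin{equation*}
\frac{\#L-\#L^\varepsilon_-}{\#L} \lesssim \frac{\gamma\log T}{\sqrt{\#L}} + \frac{\zeta^2}{\sqrt{k}} \le \frac{\gamma\log T+\zeta^2}{\sqrt{k}},
\end{equation*}
since $\varepsilon = k^{-1/2}$. Multiplying by $2(M+\max_t\vert \varepsilon_t\vert)$ and tracking the constants produces the claimed estimate.
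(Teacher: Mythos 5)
Your opening reduction (to $2(M+\max_t\vert\varepsilon_t\vert)\,(\#L-\#\smash{L^\varepsilon_-})/\#L$) is exactly the paper's first step, and your plan of controlling the count discrepancy through \eqref{s0} applied to approximating rectangles is also in the spirit of the paper. However, there is a genuine gap at the point where you invoke \eqref{s0} for $L^\varepsilon_-$ and $L^\varepsilon_+$. The inequality \eqref{s0} only covers rectangles $R\in\mathcal{R}_{\varepsilon,w}$ with $\mu(R)\geq w$, and your justification --- $\Leb(L)\geq w$ plus Lemma~\ref{copula} gives $\mu(L^\varepsilon_\pm)\gtrsim k/T$ --- does not deliver this threshold: from $\Leb(L)\geq w$ one only gets $\mu(L^\varepsilon_-)\geq \zeta^{-1}e^{-\varepsilon}w$ and $\mu(L^\varepsilon_+)\geq\mu(L)\geq\zeta^{-1}w$, both of which fall \emph{below} $w=k/(4\zeta T)$ since $\zeta>1$; ``$\gtrsim k/T$'' with an unspecified constant is not enough here. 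Closing this hole is precisely where the bulk of the paper's proof lives: it first uses outer approximations of enlarged rectangles together with \eqref{s0} and \ref{leavesAssumption} to show that \emph{any} rectangle containing at least $k$ points must satisfy $\mu\geq 2\zeta w$ (the implication \eqref{implicationLeafNumber}), and only then deduces $\mu(L^\varepsilon_-)\geq\zeta w\geq w$ so that \eqref{s0} may legitimately be applied to the inner approximation. In your argument the leaf property $\#L\geq k$ is used only for the final division, so this interaction between the count lower bound and \eqref{s0} is missing, and without it the applications of \eqref{s0} to $L^\varepsilon_\pm$ are unjustified.

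A secondary issue is the constant. Your route through the outer approximation passes through Lebesgue measure twice (once for $\mu(L^\varepsilon_+)-\mu(L^\varepsilon_-)$, once to compare $\mu(L^\varepsilon_+)$ with $\mu(L^\varepsilon_-)$), which produces a bound of order $(\zeta^4+\zeta\gamma\log T)/\sqrt{k}$ rather than the stated $(\zeta^2+2\gamma\log T)/\sqrt{k}$; so ``tracking the constants produces the claimed estimate'' is not accurate for large $\zeta$. The paper avoids this by comparing $\#L$ and $\#\smash{L^\varepsilon_-}$ directly via the multiplicative relations \eqref{t0}--\eqref{t2} (ratio bounds in $\mu$ rather than in $\Leb$), never introducing the outer approximation of $L$ itself except in the preliminary step \eqref{dd}. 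The order in $T$ and $k$ of your bound would still suffice for Theorem~\ref{puttingThingsTogether}, but as a proof of the lemma as stated, both the measure-threshold gap and the constant need repair.
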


\begin{proof}
	First, observe that
	\begin{align}\label{termTwoBound}
	\begin{aligned}
	\MoveEqLeft\sup_{L\in \mathcal{L}_{k}^w} \Bigl\vert \frac{1}{\# L}\sum_{t\colon Z_t \in L}Y_t - \frac{1}{\# \smash{L^\varepsilon_-}}\sum_{t\colon Z_t \in \smash{L^\varepsilon_-}}Y_t \Bigr\vert \\
	&\leq 2\bigl(M+ \max_{t=1, \dots, T}\vert \varepsilon_t\vert \bigr) \sup_{L\in \mathcal{L}_{k}^w}\frac{\# L - \# \smash{L^\varepsilon_-}}{\# L}.
	\end{aligned}
	\end{align}
	It follows that we need to show how \eqref{s0} implicitly restricts $\# L$ and $\# \smash{L_-^\varepsilon}$. Initially, we will argue that \eqref{s0} implies
	\begin{align}\label{lemmaGoal}
	&\# L\ \ \leq  e^{\zeta^2 \varepsilon} T \mu (L)+ \gamma e^{\frac{1}{2}\zeta^2 \varepsilon} \log T \sqrt{T\mu (L)}, \\
	&\# L\ \ \geq \frac{T\mu (\smash{L^\varepsilon_-})-\gamma^2(\log T)^2}{2}, \label{lemmaGoal1} \\
	\text{and}\qquad &\# \smash{L^\varepsilon_-} \geq  T\mu (L^\varepsilon_-) - \gamma \log T \sqrt{T\mu (\smash{L^\varepsilon_-})} \label{lemmaGoal2}
	\end{align}
	for all $L\in \mathcal{L}_{k}^w$. Consider any rectangle $R\subseteq [0,1]^p$ with $\mu (R)\geq \zeta w$,  and note that such rectangle satisfies $\Leb (R) \geq w$ by Lemma~\ref{copula}. Consequently, Theorem~\ref{approxSets} implies the existence of an outer approximation $R_+\supseteq R$ from $\mathcal{R}_{\varepsilon,w}$ with $\Leb (R_+)\leq e^\varepsilon \Leb (R)$. The inequality \eqref{s0} shows in particular that
	\begin{equation}\label{outerAppIneq}
	\frac{\# R_+-T\mu (R_+)}{\sqrt{T\mu (R_+)}} \leq \gamma \log T.
	\end{equation}
	Obviously $\# R_+ \geq \# R$, and by Lemma~\ref{copula} the $\mu$-measure of $R_+$ is bounded in terms of that of $R$ as
	\begin{equation*}
	\mu (R_+)\leq \mu (R) + \zeta^2 (e^{\varepsilon}-1)\mu (R)\leq e^{\zeta^{2} \varepsilon}\mu (R).
	\end{equation*}
	By combining this with \eqref{outerAppIneq} we conclude that
	\begin{equation}\label{dd}
	\sup_{R\colon \mu (R)\geq \zeta w}\frac{\# R -e^{\zeta^{2} \varepsilon} T \mu (R)}{\sqrt{T \mu (R)}} \leq \gamma e^{\frac{1}{2}\zeta^{2} \varepsilon}\log T,
	\end{equation}
	where it is implicitly understood that the supremum only runs over rectangles in $[0,1]^p$. Now, if $R$ is a rectangle with $\mu (R)<2\zeta w$, we may expand it along one or more of the $p$ directions to obtain a new rectangle $\smash{\widetilde{R}}$ with $R\subseteq\smash{\widetilde{R}}\subseteq [0,1]^p$ and $\mu (\smash{\widetilde{R}})= 2\zeta w$. Thus, by \eqref{dd} this means that
	\begin{equation}\label{leafNumberBound}
	\# R \leq \Bigl(\frac{e^{\zeta^2\varepsilon}}{2} + \frac{\gamma e^{\frac{1}{2}\zeta^2 \varepsilon}\log T}{\sqrt{2k}}\Bigr)k.
	\end{equation}
	By \ref{leavesAssumption}, the last term in the parenthesis goes to zero and $e^{\zeta^2\varepsilon}$ goes to one as $T$ approaches infinity, so we establish that $\# R < k$ as long as $T$ exceeds a certain threshold (which does not depend on $R$). To put it differently, as long as $T$ is sufficiently large and for any rectangle $R\subseteq [0,1]^p$, the following implication holds:
	\begin{equation}\label{implicationLeafNumber}
	\# R \geq k\qquad \Longrightarrow \qquad \mu (R)\geq 2 \zeta w.
	\end{equation}
	Consider now any leaf $L\in \mathcal{L}_k$. By \eqref{implicationLeafNumber} it must be the case that $\mu (L)\geq 2 \zeta w$, and thus \eqref{lemmaGoal} is an immediate consequence of \eqref{dd}. Moreover, the $\mu$-measure of the inner $\varepsilon$-approximation $\smash{L^\varepsilon_-}$ of $L$ is bounded from below as
	\begin{equation}\label{innerMuBound}
	\mu (L^\varepsilon_-) \geq (1-\zeta^2 (1-e^{-\varepsilon}))\mu (L) \geq 2(1-\zeta^2 (1-e^{-\varepsilon})) \zeta w\geq \zeta w,
	\end{equation}
	where the last inequality applies as long as $T$ is large enough. Thus, \eqref{lemmaGoal2} is implied by \eqref{s0}. In order to prove \eqref{lemmaGoal1}, first note that
	\begin{equation}\label{p1}
	T\mu (L^\varepsilon_-) \leq \# L + \gamma \log T \sqrt{T\mu (\smash{L_{-}^\varepsilon})}
	\end{equation}
	by \eqref{lemmaGoal2}. By dividing both sides of \eqref{p1} with $\sqrt{T\mu (\smash{L^\varepsilon_-})}$ and using that $T\mu (\smash{L^\varepsilon_-})\geq k/4$ when $T$ is large (by \eqref{innerMuBound}) we obtain
	\begin{equation}\label{p2}
	\sqrt{T\mu (\smash{L^\varepsilon_-})} \leq \frac{2\, \# L}{\sqrt{k}}  + \gamma \log T.
	\end{equation}
	Now, by using the bound \eqref{p2} for the last term in \eqref{p1} and rearranging terms,
	\begin{equation*}
	\# L \geq \frac{T\mu (\smash{L^\varepsilon_-})- \gamma^2 (\log T)^2}{1+2\gamma \log T/\sqrt{k}}.
	\end{equation*}
	By \ref{leavesAssumption}, $2\gamma\log T /\sqrt{k}\leq 1$ when $T$ is sufficiently large, and this proves \eqref{lemmaGoal1}. Now we use \eqref{lemmaGoal}--\eqref{lemmaGoal2} to bound $(\# L - \# \smash{L_-^\varepsilon})/\# L $ uniformly across $L\in \mathcal{L}_k^w$. For an arbitrary leaf $L\in \mathcal{L}_k^w$ it follows by \eqref{lemmaGoal} that
	\begin{equation*}
	\bigl(e^{\frac{1}{2}\zeta^2 \varepsilon}\sqrt{T\mu (L)}+\gamma \log T\bigr)^2\geq \# L + \gamma^2 (\log T)^2,
	\end{equation*}
	and hence
	\begin{align}
	e^{\zeta^2\varepsilon}T\mu (L)&\geq \bigl(\sqrt{\# L + \smash{\gamma^2 (\log T)^2}}-\gamma \log T\bigr)^2 \notag \\
	&= \# L + 2\gamma^2 (\log T)^2 - 2 \gamma \log T \sqrt{\# L + \smash{\gamma^2 (\log T)^2}}\notag\\
	&\geq \# L - 4 \gamma \log T\sqrt{\# L},\label{t2}
	\end{align}
	where, due to \eqref{lemmaGoal1}, the last inequality applies as long as $T$ exceeds a certain threshold (which does not depend on $L$). Moreover, \eqref{lemmaGoal1} implies
	\begin{equation}\label{t1}
	\sqrt{T\mu (\smash{L^\varepsilon_-})}\leq \sqrt{2\, \# L} + \gamma \log T\leq 2\sqrt{\# L}
	\end{equation}
	and, as in \eqref{innerMuBound}, the $\mu$-measure of $\smash{L_-^\varepsilon}$ is bounded from below as
	\begin{equation}\label{t0}
	\mu (\smash{L^\varepsilon_-}) \geq (1- \zeta^2(1-e^{-\varepsilon}))\mu (L) \geq e^{-2\zeta^2\varepsilon}\mu (L).
	\end{equation}
	Both \eqref{t1} and \eqref{t0} require that $T$ is large. By starting from \eqref{lemmaGoal2}, and then using \eqref{t2}--\eqref{t0}, we get the estimate
	\begin{align*}
	\# \smash{L^\varepsilon_-} \geq e^{-2\zeta^2\varepsilon}T\mu (L) - 2 \gamma \log T \sqrt{\# L}\geq e^{-3\zeta^2\varepsilon} \# L - 6\gamma \log T \sqrt{\# L}
	\end{align*}
	for large $T$. Thus, for such $T$, 
	\begin{equation*}
	\sup_{L\in \mathcal{L}_{k}^w} \frac{\# L - \# \smash{L^\varepsilon_-}}{\# L} \leq 1-e^{-3\zeta^2 \varepsilon} + \sup_{L\in \mathcal{L}_{k}^w}\frac{6\gamma \log T}{\sqrt{\# L}}\leq \frac{3 \zeta^2 + 6 \gamma \log T}{\sqrt{k}}.
	\end{equation*}
	In view of \eqref{termTwoBound}, this finishes the proof.
\end{proof}

\begin{remark}\label{LclassRemark}
	Suppose that we are in the setting of Lemma~\ref{secondTerm}. In its proof it is in fact established that $\mathcal{L}_k=\mathcal{L}_k^w$ when \eqref{s0} holds and $T$ is large. For instance, this is an immediate consequence of \eqref{implicationLeafNumber}.
\end{remark}

In a similar way, we use \eqref{s0} to bound the second term of \eqref{decomp}; this is detailed in the following lemma.

\begin{lemma}\label{thirdTerm}
	Suppose that \ref{noiseDist}--\ref{leavesAssumption} are satisfied, and let $\varepsilon = k^{-1/2}$ and $w = k/(4\zeta T)$ where $\zeta \in (1,\infty)$ is given as in Lemma~\ref{copula}. Then, the inequality \eqref{s0} implies
	\begin{align*}
	\sup_{L\in \mathcal{L}_{k}^w} \vert G_T(\smash{L_-^\varepsilon})\vert &\leq  \frac{4M\gamma \log T}{\sqrt{k}} + 2\sup_{R\in \mathcal{R}_{\varepsilon,w}\colon \mu (R)\geq \zeta w}\frac{1}{T\mu (R)}\Bigl\vert \sum_{t\colon Z_t \in R}\varepsilon_t \Bigr\vert \\
	&\quad + 2\sup_{R\in \mathcal{R}_{\varepsilon,w}\colon \mu (R)\geq \zeta w} \frac{\bigl\vert\frac{1}{T}\sum_{t\colon Z_t \in R} f(X_t) - \mathbb{E}[f(X)\mathds{1}_R(Z)]\bigr\vert}{\mu (R)}
	\end{align*}
	for all sufficiently large $T$, where $M$ is given by \eqref{finiteRegression}.
\end{lemma}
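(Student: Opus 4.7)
The plan is to fix an arbitrary leaf $L \in \mathcal{L}_k^w$, set $R \coloneqq L_-^\varepsilon$, and decompose $G_T(R)$ additively into an $f$-contribution and an $\varepsilon$-contribution. Since $\varepsilon_1$ is independent of $X_1$ with mean zero, we have $\eta(R) = \mathbb{E}[Y_1 \mid Z_1 \in R] = \mathbb{E}[f(X_1)\mathds{1}_R(Z_1)]/\mu(R)$, and substituting $Y_t = f(X_t) + \varepsilon_t$ in the definition of $G_T(R)$ gives
\begin{equation*}
G_T(R) = \Bigl(\frac{1}{\# R}\sum_{t\colon Z_t \in R} f(X_t) - \frac{\mathbb{E}[f(X_1)\mathds{1}_R(Z_1)]}{\mu(R)}\Bigr) + \frac{1}{\# R}\sum_{t\colon Z_t \in R}\varepsilon_t.
\end{equation*}
The strategy is to match each of these two pieces with a corresponding summand in the stated bound, uniformly in $L\in\mathcal{L}_k^w$.

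For the $\varepsilon$-contribution I would simply factor out $T\mu(R)/\# R$. By the estimate \eqref{innerMuBound} from the proof of Lemma~\ref{secondTerm}, the inner approximation $R$ satisfies $\mu(R)\geq \zeta w = k/(4T)$ for all sufficiently large $T$, so $\sqrt{T\mu(R)} \geq \sqrt{k}/2$; combined with \eqref{s0} and assumption~\ref{leavesAssumption} (which makes $\gamma \log T/\sqrt{k}$ vanish), this yields $T\mu(R)/\# R \leq 2$ for large $T$, producing the second summand of the target bound together with its factor $2$.

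The $f$-contribution is handled via the algebraic identity
\begin{equation*}
\frac{1}{\# R}\sum_{t\colon Z_t \in R}f(X_t) - \eta_f(R) = \frac{T}{\# R}\Bigl(\frac{1}{T}\sum_{t\colon Z_t\in R}f(X_t) - \mathbb{E}[f(X_1)\mathds{1}_R(Z_1)]\Bigr) + \eta_f(R)\cdot \frac{T\mu(R) - \# R}{\# R},
\end{equation*}
where $\eta_f(R)\coloneqq \mathbb{E}[f(X_1)\mathds{1}_R(Z_1)]/\mu(R)$. The first summand on the right is at most $(2/\mu(R))$ times the empirical/expectation deviation appearing in the third summand of the target bound (using $T/\# R \leq 2/\mu(R)$). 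The second summand is bounded, using $\vert \eta_f(R)\vert \leq M$ from \ref{sparsity} together with $\vert T\mu(R)-\# R\vert \leq \gamma \log T \sqrt{T\mu(R)}$ from \eqref{s0}, by $M\cdot 2\gamma \log T/\sqrt{T\mu(R)}\leq 4M\gamma \log T/\sqrt{k}$, which is precisely the first summand of the stated bound.

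Putting the two estimates together and passing to the supremum over $L \in \mathcal{L}_k^w$ completes the proof, since each $R=L_-^\varepsilon$ belongs to $\{R\in \mathcal{R}_{\varepsilon,w}\colon \mu(R)\geq \zeta w\}$, so the suprema over leaves may be enlarged to the suprema over rectangles that appear in the statement. The main obstacle is purely the bookkeeping surrounding the application of \eqref{s0}: one needs $\mu(R)\geq w$ for \eqref{s0} to be applicable to $R$, and one needs $\log T/\sqrt{T\mu(R)}$ to remain controllable under \ref{leavesAssumption} so that the replacement of $\# R$ by $T\mu(R)$ costs at most an absolute multiplicative constant. Both points are secured by the lower bound $\mu(R)\geq \zeta w$ already established in the proof of Lemma~\ref{secondTerm}.
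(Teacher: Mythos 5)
Your proposal is correct and follows essentially the same route as the paper: the algebraic identity you write out is exactly the paper's bound \eqref{thirdTermMainBound} (with the count-deviation term controlled by $M$, the $\varepsilon$-sum, and the $f$-deviation term), and the replacement of $\# R$ by $T\mu(R)$ up to a factor $2$, together with the inclusion of the inner approximations $L_-^\varepsilon$ in $\{R\in\mathcal{R}_{\varepsilon,w}\colon \mu(R)\geq \zeta w\}$ via \eqref{innerMuBound}, mirrors \eqref{thirdTerm0}--\eqref{thirdTerm2}. No gaps.
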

\begin{proof}
	For any given rectangle $R$ we have the bound
	\begin{align}\label{thirdTermMainBound}
	\begin{aligned}
	\vert G_T(R)\vert &\leq M \frac{\vert \# R - T\mu (R)\vert}{\# R} + \frac{1}{\# R}\Bigl\vert \sum_{t\colon Z_t \in R} \varepsilon_t \Bigr\vert \\
	&\quad + \frac{T}{\# R}\Bigl\vert\frac{1}{T}\sum_{t\colon Z_t \in R}f(X_t) - \mathbb{E}[f(X)\mathds{1}_R(Z)]\Bigr\vert.
	\end{aligned}
	\end{align}
	When \eqref{s0} is satisfied and $T$ is large enough, it follows from \eqref{innerMuBound} (which holds under \ref{noiseDist}--\ref{leavesAssumption}) that
	\begin{equation}\label{thirdTerm0}
	\mathcal{R}^\prime \coloneqq \{R\in \mathcal{R}_{\varepsilon,w}\, :\, \mu (R)\geq \zeta w\}\supseteq \{L_-^\varepsilon \, :\, L\in \mathcal{L}_k^w\}.
	\end{equation}
	Moreover, for any $R\in \mathcal{R}^\prime$, \eqref{s0} implies immediately that
	\begin{equation}\label{thirdTerm1}
	\# R \geq T \mu (R)\Bigl(1-\frac{2\gamma \log T}{\sqrt{k}} \Bigr)\geq \frac{T\mu (R)}{2},
	\end{equation}
	and hence also that
	\begin{equation}\label{thirdTerm2}
	\frac{\vert \# R - T\mu (R)\vert }{\# R} \leq \frac{4\gamma \log T}{\sqrt{k}}
	\end{equation}
	as soon as $T$ exceeds a certain threshold (which is independent of $R$). By combining \eqref{thirdTermMainBound}--\eqref{thirdTerm2} we obtain the result.
\end{proof}

\noindent Since $\mathcal{R}_{\varepsilon ,w}$ is a rather small collection of sets, it is hinted by Lemmas~\ref{secondTerm} and~\ref{thirdTerm} that the only missing part in order to prove Theorem~\ref{puttingThingsTogether} is to obtain bounds on
\begin{equation*}
\max_{t=1,\dots, T}\vert \varepsilon_t\vert,\qquad \frac{1}{T} \Bigl\vert \sum_{t\colon Z_t \in R}\varepsilon_t \Bigr\vert\qquad \text{and}\qquad 
\Bigl\vert\frac{1}{T}\sum_{t\colon Z_t \in R} f(X_t) - \mathbb{E}[f(X)\mathds{1}_R(Z)]\Bigr\vert
\end{equation*}
for any $R\in \mathcal{R}_{\varepsilon, w}$ with $\mu (R)\geq \zeta w$. The first term is easy to handle, since it is a maximum of i.i.d.\ random variables satisfying Bernstein's condition \eqref{subGaussian}. The last two terms can be handled by relying on Bernstein type inequalities for martingale differences and weakly dependent random variables. We will go through the details below.

\begin{proof}[Proof of Theorem~\ref{puttingThingsTogether}]
	The proof goes by defining four events $\mathcal{E}_1$, $\mathcal{E}_2$, $\mathcal{E}_3$ and $\mathcal{E}_4$ and arguing that (i) the inequality \eqref{mainInequality} holds true on $\mathcal{E}_1\cap \mathcal{E}_2\cap \mathcal{E}_3\cap \mathcal{E}_4$, and (ii) each event $\mathcal{E}_i$ occurs with probability at least $1-T^{-1}$. With $\varepsilon = k^{-1/2}$ and $w = k/(4\zeta T)$, $\zeta \in (1,\infty)$ given as in Lemma~\ref{copula}, the events that we will consider are the following:
	\begin{align*}
	\mathcal{E}_1 &\coloneqq \Bigl\{\sup_{R\in \mathcal{R}_{\varepsilon,w}\colon \mu (R) \geq \zeta w} \frac{\vert \# R - T\mu (R)\vert}{\sqrt{T\mu (R)}} \leq \gamma \log T \Bigr\},\\
	\mathcal{E}_2 &\coloneqq \bigl\{ \max_{t=1,\dots, T} \vert \varepsilon_t\vert \leq c_1 \log T\bigr\}, \\
	\mathcal{E}_3 &\coloneqq \Bigl\{ \sup_{R\in \mathcal{R}_{\varepsilon,w}\colon \mu (R) \geq \zeta w} \frac{1}{T\mu (R)}\Bigl\vert \sum_{t\colon Z_t\in R} \varepsilon_t \Bigr\vert \leq c_2\frac{\log T}{\sqrt{k}} \Bigr\} \cup \mathcal{E}_1^c,\\
	\mathcal{E}_4 &\coloneqq \Bigl\{\sup_{R\in \mathcal{R}_{\varepsilon,w}\colon \mu (R)\geq \zeta w} \frac{\bigl\vert\frac{1}{T}\sum_{t\colon Z_t \in R} f(X_t) - \mathbb{E}[f(X)\mathds{1}_R(Z)]\bigr\vert}{\mu (R)} \leq  c_3\frac{\log T}{\sqrt{k}}\Bigr\}.
	\end{align*}
	Here $\gamma$ is the constant from Lemma~\ref{eventE}, while $c_1$, $c_2$ and $c_3$ will be introduced during the proof. Moreover, $\mathcal{E}_1^c$ refers to the complement of $\mathcal{E}_1$.
	
	\medskip
	
	\noindent \emph{Proof of (i)}: Suppose that the event $\mathcal{E}_1\cap \mathcal{E}_2\cap \mathcal{E}_3\cap \mathcal{E}_4$ has occurred. Then, by using \eqref{decomp} and Lemmas~\ref{firstTerm}, \ref{secondTerm} and \ref{thirdTerm}, it follows that
	\begin{align*}
	\sup_{L\in \mathcal{L}_k} \vert G_T (L)\vert &\leq \frac{2M\zeta^2}{\sqrt{k}} + \frac{4M\gamma \log T}{\sqrt{k}} + \frac{2 c_2\log T}{\sqrt{k}} + \frac{2c_3 \log T}{\sqrt{k}} \\
	&\quad + 6(M+c_1\log T)\frac{\zeta^2 + 2 \gamma \log T}{\sqrt{k}}.
	\end{align*}
	In view of this inequality, \eqref{empiricalReduction} and Remark~\ref{LclassRemark}, we conclude that \eqref{mainInequality} is satisfied on $\mathcal{E}_1\cap \mathcal{E}_2\cap \mathcal{E}_3\cap \mathcal{E}_4$ for a suitably chosen constant $\beta$.
	
	\medskip
	
	\noindent \emph{Proof of (ii)}: The content of Lemma~\ref{eventE} is exactly that $\mathbb{P}(\mathcal{E}_1)\geq 1- T^{-1}$. Since the moments of $\varepsilon_1$ meet \eqref{subGaussian}, its distribution is sub-exponential and \eqref{subexponential} holds. Moreover, $\varepsilon_1,\dots, \varepsilon_T$ are i.i.d.\ random variables, so by applying a union bound we obtain the estimate
	\begin{equation*}
	\mathbb{P}\bigl(\max_{t=1,\dots, T} \vert \varepsilon_t \vert >x\bigr) \leq \gamma_1 T e^{-\gamma_2 x},\qquad x >0.
	\end{equation*}
	In other words, $\max_{t=1,\dots, T} \vert \varepsilon_t\vert \leq x$ with probability at least $1-T^{-1}$ if $x\geq \log (\gamma_1 T^2)/\gamma_2$, and this shows $\mathbb{P}(\mathcal{E}_2)\geq 1- T^{-1}$ for some $c_1$.
	
	Next, consider any rectangle $R\in \mathcal{R}_{\varepsilon, w}$ with $\mu (R) \geq \zeta w$ and observe that $(\varepsilon_t \mathds{1}_R(Z_t))_{t\geq 1}$ is a martingale difference sequence with respect to the filtration $\mathcal{F}_t = \sigma (Y_s\, :\, s\leq t)$. Since $\varepsilon_t$ is independent of $\mathcal{F}_{t-1}$ and its moments satisfy \eqref{subGaussian},
	\begin{equation*}
	\mathbb{E}[\vert \varepsilon_t \mathds{1}_R(Z_t)\vert^m \mid \mathcal{F}_{t-1}] \leq m! c^{m-2}\mathds{1}_R(Z_t),\qquad m \geq 3.
	\end{equation*}
	In particular, these observations show that we can rely on a Bernstein (Freedman) type inequality for unbounded summands to obtain
	\begin{equation}\label{freedman}
	\log \mathbb{P}\Bigl(\frac{1}{T}\Bigl\vert\sum_{t\colon Z_t\in R} \varepsilon_t \Bigr\vert >x,\, \# R\leq y \Bigr)\lesssim - \frac{x^2T}{y/T+x}
	\end{equation}
	for any $x,y>0$. Such a result can, e.g., be found in \cite[Theorem~8.2.2]{victor1999decoupling}. Let $\gamma$ be the constant from Lemma~\ref{eventE}, consider specifically
	\begin{equation*}
	y= T\mu (R) + \gamma \log T \sqrt{T\mu (R)}
	\end{equation*}
	and note that $y\leq 2 T\mu (R)$ when $T$ is large (by \ref{leavesAssumption}). By using \eqref{freedman} with this choice of $y$ it follows that
	\begin{equation*}
	\log \mathbb{P}\Bigl(\Bigl\{\frac{1}{T}\Bigl\vert\sum_{t\colon Z_t\in R} \varepsilon_t \Bigr\vert >x\Bigr\} \cap \mathcal{E}_1 \Bigr)\lesssim -\frac{x^2T}{\max \{\mu (R),x\}}.
	\end{equation*}
	From this inequality we deduce the existence of a constant $\kappa$, such that for any $\tau >0$,
	\begin{equation}\label{mg1}
	\mathbb{P}\Bigl(\Bigl\{\frac{1}{T}\Bigl\vert\sum_{t\colon Z_t\in R} \varepsilon_t \Bigr\vert >x\Bigr\} \cap \mathcal{E}_1 \Bigr)\leq \frac{1}{\tau}
	\end{equation}
	when
	\begin{equation}\label{xThres}
	x  = \kappa \max \biggl\{\frac{\log \tau}{T},\sqrt{\frac{\mu (R) }{T} \log \tau} \biggr\}.
	\end{equation}
	The maximum in \eqref{xThres} is equal to its second term if
	\begin{equation}\label{kBound}
	k\geq 4 \log \tau.
	\end{equation}
	It follows from Theorem~\ref{approxSets}\ref{Rcardinality} and \ref{leavesAssumption} that \eqref{kBound} is satisfied if $\tau = \vert \mathcal{R}_{\varepsilon,w}\vert T$ and $T$ is large, and thus \eqref{mg1} and \eqref{xThres} show that
	\begin{equation*}
	\mathbb{P}\biggl(\biggl\{\frac{1}{T}\Bigl\vert\sum_{t\colon Z_t\in R} \varepsilon_t  \Bigr\vert >c_2 \log T\sqrt{\frac{\mu (R)}{4T}} \biggr\} \cap \mathcal{E}_1 \Bigr) \leq \frac{1}{\vert \mathcal{R}_{\varepsilon,w}\vert T},
	\end{equation*}
	for a suitable constant $c_2$. By rearranging terms and using that $T\mu (R)\geq k/4$ it follows that
	\begin{equation*}
	\mathbb{P}\biggl(\biggl\{\frac{1}{T\mu (R)}\Bigl\vert\sum_{t\colon Z_t\in R} \varepsilon_t  \Bigr\vert >c_2 \frac{\log T}{\sqrt{k}}\biggr\} \cap \mathcal{E}_1 \biggr) \leq \frac{1}{\vert \mathcal{R}_{\varepsilon,w}\vert T}.
	\end{equation*}
	Consequently, by relying on a union bound over all rectangles in $\mathcal{R}_{\varepsilon,w}$, we establish that $\mathbb{P}(\mathcal{E}_3)\geq 1- T^{-1}$.
	
	To show $\mathbb{P}(\mathcal{E}_4)\geq 1- T^{-1}$ we consider again an arbitrary rectangle $R\in \mathcal{R}_{\varepsilon, w}$ with $\mu (R) \geq \zeta w$. The sequence $(f(X_t)\mathds{1}_R(Z_t))_{t\geq 1}$ is bounded and $\alpha$-mixing, and its associated mixing coefficients is bounded by those of $(X_t)_{t\geq 1}$, which we will denote by $(\alpha (t))_{t\geq 1}$. In particular, by \eqref{alphaMixing} it follows that the mixing coefficients of $(f(X_t)\mathds{1}_R(Z_t))_{t\geq 1}$ are bounded by an exponentially decaying sequence of numbers with a decay rate which does not depend on $R$. Consequently, as in the proof of Lemma~\ref{eventE}, we can again rely on \cite[Theorem~2]{merlevede2009bernstein} to obtain that
	\begin{equation}\label{lastEvent}
	\log \mathbb{P}\Bigl(\Bigl\vert \frac{1}{T}\sum_{t\colon Z_t \in R}f(X_t) - \mathbb{E}[f(X)\mathds{1}_R(Z)] \Bigr\vert > x\Bigr) \lesssim -\frac{x^2 T}{\nu_R^2 + T^{-1}+x(\log T)^2}
	\end{equation}
	where
	\begin{equation*}
	\nu_R^2 \coloneqq \Var (f(X_1)\mathds{1}_R(Z_1)) + 2 \sum_{t=1}^\infty \vert \Cov (f(X_{t+1})\mathds{1}_R(Z_{t+1}),f(X_1)\mathds{1}_R(Z_1))\vert.
	\end{equation*}
	Note that 
	\begin{equation*}
	\inf \{y\in [0,\infty)\, :\, \mathbb{P}(\vert f(X)\vert \mathds{1}_R(Z)>y)\leq u\} \leq M\mathds{1}_{\{u\leq \mu (R)\}},
	\end{equation*}
	so it follows by Rio's covariance inequality \cite[Theorem~1.1]{rio1993covariance} that
	\begin{equation*}
	\vert \Cov (f(X_{t+1})\mathds{1}_R(Z_{t+1}),f(X_1)\mathds{1}_R(Z_1))\vert \leq 4M^2 \min \{\alpha (t), \mu (R)\}.
	\end{equation*}
	Thus, by using the same arguments as in the proof of Lemma~\ref{eventE} (in relation to \eqref{varProxyBound}) we establish $\nu_R^2 \lesssim \mu (R) \log T$, meaning that \eqref{lastEvent} implies
	\begin{align}\label{lastEvent2}
	\log \mathbb{P}\Bigl(\Bigl\vert \frac{1}{T}\sum_{t\colon Z_t \in R}f(X_t) - \mathbb{E}[f(X)\mathds{1}_R(Z)] \Bigr\vert > x\Bigr)
	\lesssim - \frac{x^2T}{\max\{\mu (R)\log T , x (\log T)^2\}}.
	\end{align}
	Since the right-hand side of \eqref{lastEvent2} is the same as in \eqref{bernsteinWeakDep}, we can use the exact same arguments to verify the existence of a constant $c_3$ such that
	\begin{equation*}
	\mathbb{P}\biggl(\Bigl\vert \frac{1}{T} \sum_{t\colon Z_t\in R}f(X_t) - \mathbb{E}[f(X)\mathds{1}_R(Z)]\Bigr\vert >c_3\log T\sqrt{\frac{\mu (R)}{4T}} \biggr) \leq \frac{1}{\vert \mathcal{R}_{\varepsilon,w}\vert T}
	\end{equation*}
	when $T$ exceeds a certain threshold (not depending on $R$). In particular, 
	\begin{equation*}
	\mathbb{P}\biggl(\frac{\bigl\vert\frac{1}{T}\sum_{t\colon Z_t \in R} f(X_t) - \mathbb{E}[f(X)\mathds{1}_R(Z)]\bigr\vert}{\mu (R)} >c_3\frac{\log T}{\sqrt{k}} \biggr) \leq \frac{1}{\vert \mathcal{R}_{\varepsilon,w}\vert T}
	\end{equation*}
	from which it follows by a union bound over rectangles in $\mathcal{R}_{\varepsilon,w}$ that $\mathbb{P}(\mathcal{E}_4)\geq 1- T^{-1}$. We have now argued that both (i) and (ii) outlined in the beginning of the proof hold true, and hence we obtain the desired result.
\end{proof}

\noindent We now turn to the task of proving Theorem~\ref{RFconsistency}. To do so, we will make use of an auxiliary result which is presented in Lemma~\ref{diameter} below. In this formulation, $\diam (A)\coloneqq \sup_{x,x'\in A} \lVert x^\prime - x\rVert$ is the diameter of $A\subseteq \mathbb{R}^p$. 

\begin{lemma}\label{diameter} Suppose \ref{noiseDist}, \ref{sparsity} and \ref{assumpMaxLeaves} are satisfied and that $\Lambda = \Lambda (\mathcal{D}_T,\Theta) \in \mathcal{V}_{\alpha,k,m}$ for all $T$. Then, for any $x\in \mathbb{R}^p$, $\diam (A_\Lambda (x))\to 0$ as $T\to \infty$ with probability one.
\end{lemma}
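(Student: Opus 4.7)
The plan is to pass to the transformed space $[0,1]^p$, establish that the depth of the leaf containing $x$ grows with $T$, use rule~\ref{splitDirections} to guarantee many splits along every coordinate direction, and turn rule~\ref{splitPoints} into a geometric decay of the leaf's extent along each direction.

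First, I would use Lemma~\ref{copula} to pass to $Z_t = \iota_h(X_t) \in [0,1]^p$. Since $h$ is strictly positive almost everywhere, $F_h$ is strictly increasing and $\iota_h^{-1}$ is continuous on $(0,1)^p$. Writing $L_T \coloneqq \iota_h(A_\Lambda(x))$ and $z \coloneqq \iota_h(x) \in (0,1)^p$, it suffices to show $\diam(L_T) \to 0$ almost surely, as continuity of $\iota_h^{-1}$ at $z$ then carries this back to the original coordinates.

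Next, I would control the depth $d = d(T)$ of $L_T$. Rule~\ref{maxLeaves} forces $\# L_T < m$, while rule~\ref{splitPoints} applied along the root-to-leaf path gives $\# L_T \geq \alpha^d T$, so $d \geq \log(T/m)/\log(\alpha^{-1}) \to \infty$ by \ref{assumpMaxLeaves}. If $s_i$ denotes the number of splits along direction $i$ on this path, then rule~\ref{splitDirections} implies that, conditionally on the past, each split is along direction $i$ with probability at least some $\rho_{\min} > 0$. Hence $s_i$ stochastically dominates a $\mathrm{Binomial}(d,\rho_{\min})$ variable, and a Chernoff bound yields $\mathbb{P}(s_i < \rho_{\min} d/2) \leq e^{-cd}$ for a suitable $c > 0$; with the resulting summability in $T$, Borel--Cantelli gives $s_i \to \infty$ almost surely for every $i$.

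Finally, I would translate the split count into a bound on the extent of $L_T$ along direction $i$. When a rectangle $A$ is split along direction $i$, the Lebesgue measure ratio of each child to $A$ equals the length ratio in direction $i$, since the other sides are preserved. Rule~\ref{splitPoints} guarantees that each child carries at least a fraction $\alpha$ of the data of $A$, and invoking a uniform count-to-measure concentration inequality of the same nature as Lemma~\ref{eventE}, applied to all ancestor rectangles through the approximating family of Theorem~\ref{approxSets}, together with the density bounds $\zeta^{-1} \leq h_Z \leq \zeta$ from Lemma~\ref{copula}, yields that off an event of summable probability, each child has $\mu$-measure at least $\alpha/2$ times that of its parent, and hence Lebesgue measure ratio at least $\zeta^{-2}\alpha/2 \eqqcolon \eta$. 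Iterating along the path, $\mathrm{length}_i(L_T) \leq (1-\eta)^{s_i}$, which together with $s_i \to \infty$ almost surely gives $\mathrm{length}_i(L_T) \to 0$ almost surely for each $i$, and therefore $\diam(L_T) \to 0$ almost surely.

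The main obstacle is the last step: converting the empirical split-balance of rule~\ref{splitPoints} into a deterministic length-ratio bound requires empirical-measure concentration that is uniform over all data-dependent ancestor rectangles, which is precisely the kind of statement that Lemma~\ref{eventE} (combined with the rectangle-approximation family of Theorem~\ref{approxSets}) is designed to deliver.
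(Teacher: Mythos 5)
Your skeleton matches the paper's: transform to $[0,1]^p$ via Lemma~\ref{copula}, bound the depth below by $\log(T/m)/\log(\alpha^{-1})$ using rules \ref{maxLeaves} and \ref{splitPoints} together with \ref{assumpMaxLeaves}, get infinitely many splits along each direction from rule \ref{splitDirections}, and contract the side length geometrically at each such split. The genuine gap is in your final step. To turn the empirical $\alpha$-balance of rule \ref{splitPoints} into a statement about measures you invoke ``a uniform count-to-measure concentration inequality of the same nature as Lemma~\ref{eventE}''. But Lemma~\ref{eventE} is proved under \ref{leavesAssumption}, which is \emph{not} among the hypotheses of Lemma~\ref{diameter} (only \ref{noiseDist}, \ref{sparsity} and \ref{assumpMaxLeaves} are assumed), and any bound of that type controls only rectangles of $\mu$-measure $\gtrsim k/T$ with relative accuracy of order $\log T/\sqrt{k}$, which is useless for the ancestors near the leaf when $k$ does not grow at the rate of \ref{leavesAssumption}. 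Moreover, for an almost-sure conclusion your phrase ``off an event of summable probability'' is unjustified: the failure probability delivered by Lemma~\ref{eventE} is $T^{-1}$, which is not summable. The paper takes a different route precisely to avoid this: it applies the Glivenko--Cantelli theorem for ergodic sampling of Adams and Nobel to the class of rectangles, which gives \emph{almost sure} uniform convergence of empirical to true measure with no rate and no assumption on $k$; combined with the estimate $\Leb(F_h(A'_i))/\Leb(F_h(A_i)) \leq 1-\zeta^{-2}\bigl(1-\mathbb{P}_\Lambda(X\in A'\mid X\in A)\bigr)$ for a parent $A$ and its child $A'$, and with the structure of \eqref{ExtendedIneq} (fix the number $n$ of $i$-splits, let $T\to\infty$, then let $n\to\infty$), this yields a deterministic contraction factor $1-\delta$ per split without any concentration rates.

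A secondary gap concerns your Chernoff--Borel--Cantelli argument for $s_i\to\infty$: it requires $\sum_T e^{-c\,d(T)}<\infty$, but \ref{assumpMaxLeaves} only guarantees $d(T)\to\infty$, possibly very slowly (for instance like $\log\log T$), so $e^{-c\,d(T)}$ need not be summable in $T$. The paper instead observes that, with the split-direction probabilities bounded below by some $\rho>0$ (reduced to $\rho_i=\rho$ without loss of generality), the indicators $S^i_l$ of an $i$-split along the path of $x$ form an i.i.d.\ Bernoulli sequence whose partial sums diverge almost surely, and then uses only that the depth tends to infinity; a quantitative Borel--Cantelli route would need a stronger lower bound on the depth than \ref{assumpMaxLeaves} provides.
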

\begin{proof}
	Let us represent the rectangle $A_\Lambda (x)$ in $\Lambda$ containing $x\in \mathbb{R}^p$ as $A_\Lambda (x) = A_\Lambda^1(x)\times \cdots \times A_\Lambda^p(x)$. Then, it suffices to show that
	\begin{equation}\label{consistencyTask}
	\Leb(A_\Lambda^i(x)) \longrightarrow 0,\qquad T\to \infty,
	\end{equation}
	with probability one for $i=1,\dots, p$. To this end, imagine the tree illustrating how $\Lambda$ is obtained by the recursive partitioning scheme and consider the path that $x$ takes down the tree from its root to the leaf $A_\Lambda (x)$. Let $d$ denote the depth of the tree at $x$ (that is, $x$ traverses exactly $d-1$ nodes before it reaches $A_\Lambda (x)$), and let $A^l$ be the node containing $x$ at depth $l$. In particular, $(A^l)_l$ is a decreasing sequence of sets with $A^1 = \mathbb{R}^p$ and $A^{d} = A_\Lambda (x)$, and $A^l_j \neq A^{l+1}_j$ for exactly one $j$ (with the notation $A = A_1\times \cdots \times A_p$). We let $S^i_l = \mathds{1}_{A^l_i \neq A^{l+1}_i}$ indicate whether the node containing $x$ at depth $l$ will be split along the $i$-th direction, and $\tau^i_l = \min\{ j\in \{\tau^i_{l-1}+1,\dots, d-1\}\, :\, S^i_j = 1\}$ the depth at which $x$ will experience the $l$-th split along the $i$-th direction ($\tau^i_0 \equiv 0$ and, say, $\tau^i_l =\infty$ if the set is empty). For an illustration of these definitions, see Figure~\ref{bernoulliRVs}. By the construction of the tree (specifically, the rules \ref{maxLeaves} and \ref{splitPoints} outlined in Section~\ref{consistency}) it holds that $m \geq T \alpha^{d-1}$, and hence 
	\begin{equation}\label{lowerDepthBound}
	d \geq 1+\frac{\log (T/m)}{\log (\alpha^{-1})}.
	\end{equation}
	Recall also that the tree is constructed in such a way that there exists a strictly positive constant $\rho$ which is a lower bound for the probability $\rho_i$ of splitting along the $i$-th direction at any given node. Suppose for simplicity (but without loss of generality) that, in fact, $\rho_i = \rho$. Then, $(S^i_l)_{l\geq 1}$ is a sequence of i.i.d.\ Bernoulli random variables and thus, with probability one,
	\begin{equation*}
	\sum_{l=1}^n S^i_l \longrightarrow \infty, \qquad n \to\infty.
	\end{equation*}
	Since the right-hand side of \eqref{lowerDepthBound} tends to infinity by \ref{assumpMaxLeaves}, it follows that 
	\begin{equation}\label{infDepth}
	\vert \{l \in \{1,\dots, d-1\} \, :\, \tau^i_l <\infty\}\vert \longrightarrow \infty,\qquad T\to \infty,
	\end{equation}
	almost surely. Consider an arbitrary number $l\in \{1,\dots, d-1\}$ with $\tau^i_l<\infty$ and let $h$ be any fixed density which aligns with Lemma~\ref{copula}. Then
	\begin{align}
	\frac{\Leb (F_h(A^{\tau^i_l+1}_i))}{\Leb (F_h(A^{\tau^i_l}_i))} &= 1- \frac{\Leb(\iota_h(A^{\tau^i_l})\setminus \iota_h(A^{\tau^i_l+1}))}{\Leb (\iota_h(A^{\tau^i_l}))}\notag \\
	&\leq 1- \zeta^{-2}\frac{\mathbb{P}_\Lambda(X\in A^{\tau^i_l}\setminus A^{\tau^i_l +1})}{\mathbb{P}_\Lambda (X\in A^{\tau^i_l})}\notag\\
	&\leq 1- \zeta^{-2}(1- \mathbb{P}_\Lambda (X\in A^{\tau^i_l+1}\mid X \in A^{\tau^i_l} ) ) \label{smallIntervals}.
	\end{align}
	(Note that, for a given interval $A\subseteq \mathbb{R}$, $F_h (A)\subseteq [0,1]$ refers to the image of $A$ under $F_h$.) Since the tree is grown with respect to the rule \ref{splitPoints},
	\begin{equation*}
	\frac{\vert \{t\in \{1,\dots, T\}\, :\, X_t \in A^{\tau^i_l+1}\}\vert}{\vert \{t\in \{1,\dots, T\}\, :\, X_t \in A^{\tau^i_l}\}\vert} \leq 1-\alpha,
	\end{equation*}
	so by a Glivenko--Cantelli theorem for ergodic processes (e.g., \cite[Theorem~1]{adams2010uniform}) we establish that
	\begin{equation}\label{boundStructure}
	\limsup_{T\to \infty}\mathbb{P}_\Lambda (X\in A^{\tau^i_l+1}\mid X\in A^{\tau^i_l})\leq 1- \alpha
	\end{equation}
	with probability one. By combining \eqref{smallIntervals} and \eqref{boundStructure} it follows that we can fix $\delta \in (0,1)$ such that, with probability one,
	\begin{equation}\label{contraction}
	\frac{\Leb (F_h(A^{\tau^i_l+1}_i))}{\Leb (F_h(A^{\tau^i_l}_i) )} \leq  1- \delta
	\end{equation}
	for all sufficiently large $T$. By definition, $A_i^{\tau^i_{l-1}+1} = A_i^{\tau^i_l}$, so by repeated use of \eqref{contraction} we obtain, for any given $n \in \{1,\dots, d-1\}$ with $\tau^i_n <\infty$,
	\begin{equation}\label{ExtendedIneq}
	\Leb (F_h (A^i_\Lambda (x))) \leq \Leb (F_h (A_i^{\tau_n^i+1})) = \prod_{l=1}^n \frac{\Leb (F_h (A_i^{\tau_l^i+1}))}{\Leb (F_h (A_i^{\tau_{l-1}^i+1}))} \leq (1-\delta)^n
	\end{equation} 
	for all sufficiently large $T$ almost surely. Thus, from \eqref{infDepth} we deduce that\linebreak $\Leb(F_h( A^i_\Lambda (x)))\to 0$ almost surely as $T\to \infty$, and this completes the proof.	
\end{proof}

\begin{figure}
	\centering
	\begin{tikzpicture}
	\node (x1) at (0,1.5) {$x$};
	\node (l1) at ([yshift=-1.13cm]x1) {$A^1= \mathbb{R}^p\colon S^i_1=0$};
	\draw[thick, dashed, ->] (x1)--(l1);
	\node at ([xshift=-7cm]l1) {\textit{Depth 1}};
	\node[circle, inner sep=1.2pt,draw,fill=black] (n1) at (0,0) {};
	\node (n2) at (-3,-1.5) {};
	\node (n3) at (3,-1.5) {};
	\draw (n2)--(n1);
	\draw[very thick] (n1)--(n3);
	\node (l2) at ([yshift=-3mm]n2) {};
	\node at ([xshift=-4cm]l2) {\textit{Depth 2}};
	\node at ([xshift=-1mm,yshift=-3mm]n3) {$A^2\colon S^i_2=1$};
	\node (n4) at ([xshift=1mm,yshift=-7mm]n2) {};
	\node (n5) at ([xshift=-1.5cm,yshift=-2cm]n4) {};
	\node (n6) at ([xshift=1.5cm,yshift=-2cm]n4) {};
	\node[circle, inner sep=1.2pt,draw,fill=black] (n7) at ([xshift=-1mm,yshift=-7mm]n3) {};
	\node (n8) at ([xshift=-1.5cm,yshift=-2cm]n7) {};
	\node (n9) at ([xshift=1.5cm,yshift=-2cm]n7) {};
	\draw[very thick] (n8)--(n7);
	\draw (n7)--(n9);
	\node (l3) at ([xshift=1mm,yshift=-3mm]n5) {};
	\node at ([xshift=-2.6cm]l3) {\textit{Depth 3}};
	\node (l32) at ([xshift=1mm,yshift=-3mm]n8) {$A^3 \colon S^i_3=1$};
	\node[circle, inner sep=1.2pt,draw,fill=black] (n10) at ([xshift=1mm,yshift=-7mm]n8) {};
	\node (n11) at ([xshift=-1.5cm,yshift=-2cm]n10) {};
	\node (n12) at ([xshift=1.5cm,yshift=-2cm]n10) {};
	\draw[very thick] (n11)--(n10);
	\draw (n10)--(n12);
	\node (l4) at ([xshift=1mm,yshift=-3mm]n11) {$A^4 = A_\Lambda (x)$};
	\node at ([xshift=-7cm]l4) {\textit{Depth 4}};
	\end{tikzpicture}
	\caption{An illustration of a tree with a depth of $d=4$ at $x$. In this example $\tau^i_1 = 2$ and $\tau^i_2 = 3$.}\label{bernoulliRVs}
\end{figure}
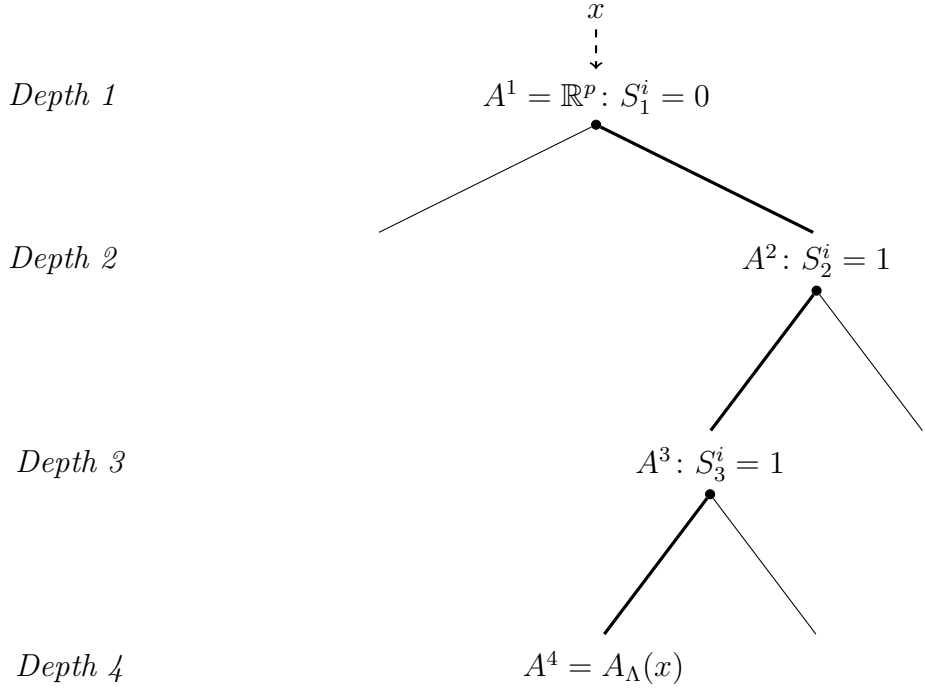

\begin{remark}\label{meinshausenRem}
	While the arguments used to prove Lemma~\ref{diameter} are somewhat similar to those of \cite[Lemma~2]{meinshausen2006quantile}, they may appear slightly more complicated. The reason is that the proof of \cite[Lemma~2]{meinshausen2006quantile} relies on an estimate of the form
	\begin{equation}\label{meinsRemark}
	\vert \{t\in \{1,\dots, T\}\, :\, X^i_t \in A^{\tau^i_n+1}_i\}\vert \leq T (1-\alpha)^n,
	\end{equation}
	from which one immediately deduces that $\Leb (F_h (A_\Lambda^i (x))) \leq \zeta(1-\alpha)^n$, which completes the proof. (Here $X^i_t = Y_{t-i}$ refers to the $i$-th entry of $X_t$.) However, the estimate \eqref{meinsRemark} does not apply in general under the rules \ref{maxLeaves}--\ref{minLeaves} (of Section~\ref{consistency}) upon which trees are grown, and hence a few additional arguments are needed to establish the alternative inequality \eqref{ExtendedIneq}.
\end{remark}

\begin{proof}[Proof of Theorem~\ref{RFconsistency}]
	By Corollary~\ref{RFconcentration},
	\begin{align*}
	\vert \hat{f}_T(x) - f(x) \vert &\leq \max_{b=1,\dots, B} \vert T^\ast_{\Lambda_b} (x)-f(x)\vert + o_p (1)	\\
	\text{and}\quad \vert \hat{f}_T(X) - f(X) \vert &\leq \max_{b=1,\dots, B} \vert T^\ast_{\Lambda_b} (X)-f(X)\vert + o_p (1)
	\end{align*}
	for suitable $\Lambda_1,\dots, \Lambda_B \in \mathcal{V}_{\alpha,k,m}$. Thus, it suffices to show that $T_\Lambda^\ast (x)\to f(x)$ and $T_\Lambda^\ast (X)\to f(X)$ in probability as $T\to \infty$ when $\Lambda \in \mathcal{V}_{\alpha, k,m}$ for all $T$. By using Lemma~\ref{diameter} together with the inequality
	\begin{equation*}
	\vert T^\ast_\Lambda (x) - f(x)\vert \leq \frac{\mathbb{E}_\Lambda[\vert f(X) - f(x)\vert \mathds{1}_{A_\Lambda (x)}(X)]}{\mathbb{P}_\Lambda (X\in A_\Lambda (x))} 
	\leq C \diam (A_\Lambda (x)),
	\end{equation*}
	which holds by \ref{Lip}, it follows that $T_\Lambda^\ast (x)\to f(x)$ almost surely and, in particular, in probability. Here, as in the proof of Lemma~\ref{diameter}, subscript $T$ indicates that we are conditioning on the randomness related to the partition $\Lambda$. The last part follows immediately from Tonelli's theorem as this implies that, on an event with probability one, $T^\ast_\Lambda (x)\to f(x)$ for (Lebesgue) almost all $x\in \mathbb{R}^p$. 
\end{proof}

\subsection*{Acknowledgements}
This work was supported by NSF grant DMS-2015379 for Davis and by Danish Council for Independent Research grant 9056-00011B for Nielsen.

\bibliographystyle{chicago}

\end{document}